\DeclareMathOperator*{\argmax}{arg\,max}
\newcolumntype{M}[1]{>{\centering\arraybackslash}m{#1}}
\newtheorem{example}{Example}
\definecolor{Gray}{gray}{0.9}
\newcommand{\ycom}[1]{{\color{red}{{[Y: #1]}}}}
\newcommand{\bi}{\begin{itemize}}
\newcommand{\ei}{\end{itemize}}
\newcommand{\BE}{\begin{enumerate}}
\newcommand{\EE}{\end{enumerate}}
\newcommand{\etc}{\mbox{\it etc.}}
\newcommand{\eg}{\mbox{\it e.g.}}
\newcommand{\ie}{\mbox{\it i.e.}}
\newcommand{\Eg}{\mbox{\it E.g.}}
\DeclareMathOperator{\EX}{\mathbb{E}}% expected value
\renewcommand\cite{\citep}	
\let\oldhat\hat
\renewcommand{\vec}[1]{\mathbf{#1}}
\renewcommand{\hat}[1]{\oldhat{\mathbf{#1}}}
\newcommand\Set[2]{\{\,#1\mid#2\,\}}
\newcommand{\minloss}{\textsc{MinLoss}}
\newcommand{\randomsample}{\emph{Na\"{i}ve}}
\newcommand{\unique}{\emph{Unique}}
\newcommand{\arbit}{\emph{Random}}
\newcommand{\ourformulation}{\textsc{SelectR}}
\newcommand{\simpleselectr}{\textsc{I-ExplR}}
\newcommand{\ccloss}{\textsc{CC-Loss}}
\newcommand{\uniquetest}{\emph{OS}}
\newcommand{\ambtest}{\emph{MS}}
\newcommand{\lp}{$1$oML}
\newcommand{\Naive}{Na\"{i}ve}
\newcommand{\naive}{na\"{i}ve}
\newcommand{\qi}{\vec{x_i}}
\newcommand{\itr}{\vec{i}}
\newcommand{\jtr}{\vec{j}}
\newcommand{\wt}{\vec{w}}
\newcommand{\wij}{w_\vec{ij}}
\newcommand{\wi}{\vec{w_{i}}}
\newcommand{\rlwt}{\vec{\bar{w}}}
\newcommand{\rlwi}{\vec{\bar{w}_{i}}}
\newcommand{\rlwij}{\vec{\bar{w}_{ij}}}
\newcommand{\q}{\vec{x}}
\newcommand{\txi}{\vec{Y}_{\vec{x_i}}}
\newcommand{\sxi}{\mathcal{Y}_\vec{x_i}}
\newcommand{\qspace}{\mathcal{X}}
\newcommand{\tx}{\vec{Y_x}}
\newcommand{\sx}{\mathcal{Y}_\vec{x}}
\newcommand{\yspace}{\mathcal{Y}}
\newcommand{\model}{M}
\newcommand{\latentmodel}{G}
\newcommand{\selectionmodule}{S}
\newcommand{\traindata}{\mathbb{D}}
\newcommand{\trainspace}{\mathcal{D}}
\newcommand{\target}{\vec{y}}
\newcommand{\targeti}{\vec{y_i}}
\newcommand{\targetij}{\vec{y_{ij}}}
\newcommand{\targetspace}{\mathcal{V}^r}
\newcommand{\dspace}{\mathcal{V}}
\newcommand{\totalloss}{L}
\newcommand{\loss}{l}
\newcommand{\pred}{\hat{\target}}
\newcommand{\predi}{\hat{\target}_\vec{i}}
\newcommand{\rltarget}{\bar{\target}}
\newcommand{\rltargeti}{\vec{\bar{y}_i}}
\newcommand{\trainsample}{(\qi,\txi)}
\newcommand{\expreward}{\mathcal{R}}
\newcommand{\rewardi}{R}
\newtheorem{definition}{Definition}
\title{Neural Learning of One-of-Many Solutions for Combinatorial Problems in Structured Output Spaces}
\author{Yatin Nandwani*, Deepanshu Jindal\thanks{ Equal contribution. Work done while at IIT Delhi. Current email: deepanshu.jindal@alumni.iitd.ac.in}\ , Mausam \& Parag Singla  \\
Department of Computer Science,
Indian Institute of Technology Delhi, INDIA\\
\texttt{\{yatin.nandwani, deepanshu.jindal.cs116, mausam, parags\}@cse.iitd.ac.in}
}
\newcolumntype{d}[1]{D{.}{.}{#1}}
\begin{document}

\maketitle

\begin{abstract}
Recent research has proposed neural architectures for solving combinatorial problems in structured output spaces. In many such problems, there may exist multiple solutions for a given input, e.g. a partially filled Sudoku puzzle may have many completions satisfying all constraints. Further, we are often interested in finding {\em any one} of the possible solutions, without any preference between them. Existing approaches completely ignore this solution multiplicity. In this paper, we argue that being oblivious to the presence of multiple solutions can severely hamper their training ability. Our contribution is two fold. First, we formally define the task of learning one-of-many solutions for combinatorial problems in structured output spaces, which is applicable for solving several problems of interest such as N-Queens, and Sudoku. Second, we present a generic learning framework that adapts an existing prediction network for a combinatorial problem to handle solution multiplicity. Our framework uses a selection module, whose goal is to dynamically determine, for every input, the solution that is most effective for training the network parameters in any given learning iteration. We propose an RL based approach to jointly train the selection module with the prediction network. Experiments on three different domains, and using two different prediction networks,  demonstrate that our framework significantly improves the accuracy in our setting, obtaining up to $21$ pt gain over the baselines.
%$35\%$ gain over baselines trained with disregard to solution multiplicity. 
\end{abstract}

\vspace{-2ex}
\section{Introduction}
\label{intro}
Neural networks have become the de-facto standard for solving perceptual tasks over low level representations, such as pixels in an image or audio signals. Recent research has also explored their application for solving symbolic reasoning tasks, requiring higher level inferences, such as neural theorem proving~\cite{rocktaschel&al15, evans&al18,minervini&al20}, and playing blocks world~\cite{dong&al19}.
The advantage of neural models for these tasks is that it will create a unified, end-to-end trainable representation for integrated AI systems that combine perceptual and high level reasoning. 
Our paper focuses on one such high level reasoning task -- solving combinatorial problems in structured output spaces, e.g., solving a Sudoku or N-Queens puzzle. These can be thought of as Constraint Satisfaction problems (CSPs) where the underlying constraints are not explicitly available, and need to be learned from training data. 
We focus on learning such constraints by a non-autoregressive neural model 
where variables in the structured output space are decoded simultaneously (and therefore independently). Notably, most of the current state-of-the-art neural models for solving combinatorial problems, \eg, SATNET \cite{wang&al19}, RRN \cite{palm&al18}, NLM \cite{dong&al19}, work with non autoregressive architectures because of their high efficiency of training and inference, since they do not have to decode the solution sequentially.

%Presumably, the goal of this line of research has been to come up with Neuro-Symbolic reasoners \dcom{reasoning networks} ~\cite{N}, which can tightly integrate both the advantages of low level reasoning, as well as higher level inferences, resulting in end-to-end trainable systems. One such class \dcom{of tasks} which requires high level reasoning consists of combinatorial problems in structured output spaces, e.g., solving a Sudoku or N-Queens. These can be thought of \dcom{as} Constraint Satisfaction Problems (CSPs), with the additional assumption that knowledge of underlying constraints is not explicitly available, and has to be learned from the training data. 

One of the key characteristics of such problems is {\em solution multiplicity} -- there could be many correct solutions for any given input, even though we may be interested in finding any one of these solutions. For example, in a game of Sudoku with only 16 digits filled, there are always multiple correct solutions ~\cite{mcguire&al12}, and obtaining any one of them suffices for solving Sudoku.
%different solutions which satisfy the underlying structural properties. Further, we are often interested in finding any one among the equivalent solutions. We refer to this as {\em multiplicity of solutions}, henceforth. 
Unfortunately, existing literature has completely ignored solution multiplicity, resulting in sub-optimally trained networks. Our preliminary analysis of a state-of-the-art neural Sudoku solver~\cite{palm&al18}\footnote{Available at \url{https://data.dgl.ai/models/rrn-sudoku.pkl}}, which trains and tests on instances with single solutions, showed that it achieves a high accuracy of 96\% on instances with single solution,  
%\dcom{100\% accuracy on more than 20-givens}
but the accuracy drops to less than 25\%, when tested on inputs that have multiple solutions.
%\ycom{We should qualify this statement further based on the following observations: We got 1\% performance using pretrained model on a specific subset of ambiguous data with 2 rows or 2 columns missing. On the ambiguous test data used in our own evaluations, we get 24\% from the pretrained model.
%And in our experiments, models trained only on unique data do get 60-70\% on ambiguous data: this can be attributed to not overfitting on unique by early stopping based on a dev set containing ambiguous data.
%}
Intuitively, the challenge comes from the fact that (a) there could be a very large number of possible solutions for a given input, and (b) the solutions may be highly varied. %(when compared using a metric representative of prediction loss). 
For example, a 16-givens Sudoku puzzle could have as many as 10,000 solutions, with maximum hamming distance between any two solutions being 61. Hence, we argue that an explicit modeling effort is required to represent this solution multiplicity.

As the first contribution of our work, we formally define the %important and 
novel problem of \emph{One-of-Many Learning} (\lp). It is given training data of the form $\{(\qi,\txi)\}$, where $\txi$ denotes a subset of all correct outputs $\sxi$ associated with input $\qi$. The goal of \lp\ is to learn a function $f$ such that, for any input $\q$, $f(\q)=\target$ for some $\target \in \sx$.
%, where $\mathcal{Y}_x$ is set of all outputs associated with $x$. 
We show that a \naive\ strategy
that uses separate loss terms for each $(\qi, \targetij)$ pair  where $\targetij\in \txi$ can result in a bad likelihood objective.
%\ycom{
Next, we introduce a multiplicity aware loss (\ccloss) and demonstrate its limitations for non-autoregressive models on structured output spaces.
%}
In response, we present our first-cut approach, \minloss, which picks up the single $\targetij$ closest to the prediction $\predi$ based on the current  parameters of prediction network (base architecture for function $f$), and uses it to compute and back-propagate the loss for that training sample $\qi$. 
Though significantly better than \naive\ training, 
through
%with the help of 
a simple example, we demonstrate that \minloss\ can be sub-optimal in certain scenarios, due to its inability to pick a $\targetij$ based on global characteristics of %the 
solution space.

%Techniques similar to \minloss\ have been deployed in other problem settings as well, especially \emp{Partial Label Learning}. We discuss them in related works.
To alleviate the issues with \minloss, 
%as our second contribution, 
we present two exploration based techniques, \simpleselectr\ and \ourformulation, that select a $\targetij$ in a non-greedy fashion, unlike \minloss. 
Both techniques are generic in the sense that they
can work with any prediction network for the given problem. 
$\simpleselectr$ relies on the prediction network itself for selecting $\targetij$, whereas \ourformulation\ is an RL based learning framework 
%\sout{As our second contribution, we present an RL based learning framework \ourformulation,} 
which uses a selection module to decide which $\targetij$ should be picked for a given input 
$\qi$, for %learning the parameters of the prediction network and 
back-propagating the loss in the next iteration. 
The \ourformulation's
selection module is trained jointly along with the prediction network using reinforcement learning, 
%where the RL reward is specified in the form of the quality of the final prediction. This 
thus allowing us to trade-off exploration and exploitation in selecting the optimum $\targetij$ by  learning a probability distribution over the space of possible $\targetij$'s for any given input $\qi$.
%\ycom{
%We compare \ourformulation\ with another simple exploration technique, \simpleselectr, that doesn't train a separate selection module, but instead relies on the prediction network itself for selecting the optimal target.
%}

We experiment on three CSPs: N-Queens, Futoshiki, and Sudoku. Our prediction networks for the first two problems are constructed using Neural Logic Machines \cite{dong&al19}, and for Sudoku, we use a state-of-the-art neural solver based on Recurrent Relational Networks \cite{palm&al18}. 
In all three problems, our experiments demonstrate that \ourformulation\ vastly outperforms \naive\ baselines by up to $21$ pts, underscoring the value of explicitly modeling solution multiplicity. \ourformulation\ also consistently improves on 
other multiplicity aware methods, viz. \ccloss, \minloss, and \simpleselectr.

%our first cut approach of \minloss\ across all datasets.
%\ycom{
%We also show that \ourformulation\ performs better than other simpler exploration technique, \simpleselectr\ that doesn't train a separate selection module, but relies on the prediction network itself for selecting the optimal target.
%}
%\ycom{
%We also compare \ourformulation\ with a simple exploration technique, \simpleselectr, that relies on the prediction network for exploration instead of learning a separate selection module 
%and show that \ourformulation\ performs better than \simpleselectr.
%}

\section{Background and Related Work} \label{sec:related_work}
%In this section, we describe some existing work related to our problem. 

\noindent \textbf{Related ML Models:}
There are a few learning scenarios within weak supervision which may appear similar to the setting of \lp, but are actually different from it. We first discuss them briefly.
%\textbf{Solution Multiplicity due to Noisy Data:}
%Weak supervision is another setting where we are given multiple solutions for a given input. 
`\textit{Partial Label Learning}' (PLL) \cite{jin&al02,cour&al11,xu&al19, feng&al19,cabannes&al20} involves learning from the training data where, for each input, a noisy set of  candidate labels is given amongst which only one label is correct. This is different from \lp\ in which there is no training noise and all the solutions in the solution set $\tx$ for a given $\q$ are correct.
%To tackle the training noise, \cite{jin&al02} propose a `\Naive\ Model' and an `EM Model': in the former, a simple average of the prediction loss with respect to all the candidates is considered, whereas in the \dcom{\st{later}latter}, the loss for each candidate is weighted by the probability assigned to that candidate by the model itself. In contrast, \ourformulation\ selects a candidate by sampling from a distribution learnt \dcom{by a jointly trained \st{its}} selection module. 
%\cite{cabannes&al20} propose `Infimum Loss', which is similar to our \minloss\ formulation: the candidate corresponding to the %minimum loss amongst all the candidates in the set is considered for optimization. 
%We note that approach to tackle ambiguity in PLL is similar to our methods, however the motivations are quite different.
%Though some of the recent approaches to tackle ambiguity in PLL~\cite{cabannes&al20} may be similar to our methods (i.e., \minloss\ ), by the way of deciding which solution in the target set should be picked next for training,
%(which is similar to our \minloss formulation), 
Though some of the recent approaches to tackle ambiguity in PLL~\cite{cabannes&al20} may be similar to our methods, i.e., \minloss\ , by the way of deciding which solution in the target set should be picked next for training, the motivations are quite different. Similarly, in the older work by \cite{jin&al02}, the {\em EM model}, where the loss for each candidate is weighted by the probability assigned to that candidate by the model itself, can be seen as a \naive\ exploration based approach,
%\sout{crude version of our RL based approach}
applied to a very different setting. 
In PLL, the objective is to select the correct label out of many incorrect ones to reduce training noise, whereas in \lp, selecting only one label for training provably improves the learnability and there is no question of reducing noise as all the labels are correct. 
Further, most of the previous work on PLL considers classification over a discrete output space with, say, $L$ labels, where as in \lp, we work with structured output spaces, e.g., an $r$ dimensional vector space where each dimension represents a discrete space of $L$ labels. 
This exponentially increases the size of the output space, making it intractable to enumerate all possible solutions as is typically done in existing approaches for PLL~\cite{jin&al02}.
%and assign a probability to each of them individually, as is typically done in PLL. % where each solution is just one discrete label 
%(\pcom{Can't existing PLL techniques be extended to structured settings?}). 

Within weak supervision, %there is also some work in 
the work on `\textit{Multi Instance Learning}' (MIL) approach for Relation Extraction (RE) %which 
employs a selection module to pick a set of sentences to be used for training a relation classifier, given a set of noisy relation labels~\cite{rcrl,rerl}. This is %clearly 
different from us where multiplicity is associated with any given input, not with a class (relation). 

Other than weak supervision, \lp\ should also not be confused with the problems in the space of multi-label learning \cite{tsoumakas&al07}. %While 
In multi-label learning, given a solution set $\tx$ for each input $\q$, the goal is to 
%learn to 
correctly predict each possible solution in the set $\tx$ for $\q$. Typically, a classifier is learned for each of the possible labels %(solutions) 
separately. On the other hand, in \lp, the objective is to learn any one of the correct solutions for a given input, and a single classifier is learned. The characteristics of the two problems are quite different, and hence, also the solution approaches. As we show later, the two settings lead to requirements for 
%two 
different kinds of generalization losses. 
%Further, while in 1oML, we deal with structured output spaces, e.g., an $r$ dimensional vector space with each dimension representing a discrete space of $L$ labels, this may not always be the case for multi-label learning which typically works with a uni-dimensional space of discrete labels.

\noindent \textbf{Solution Multiplicity in Other Settings:} 
There is some prior work related to our problem of solution multiplicity, albeit in different settings. 
%Below we describe a couple of those briefly.
%\textbf{Video-Prediction:} 
An example is the task of video-prediction, where there can be multiple next frames ($\targetij$) for a given partial video $\qi$  \cite{henaff&al17,denton&fergus18}. The multiplicity of solutions here arises from the underlying uncertainty rather than as a inherent characteristic of the domain itself. Current approaches model the final prediction as a combination of the deterministic part oblivious to uncertainty, and a non-determinstic part caused by uncertainty. There is no such separation in our case since each solution is inherently different from others. 

\iffalse
there is a lot of work on handling uncertainty~\cite{henaff&al17,denton&fergus18}, modeled via the use of a latent variable. In the absence of this latent representation, one can think of a single $\q$ being mapped to multiple possible $\target$. We note that the multiplicity of solutions here arises from the underlying uncertainty rather than as a inherent characteristic of the domain itself. The proposed solutions for these problems often model the final prediction as a combination of the deterministic part oblivious to uncertainty, and a non-determinstic part caused by uncertainty. There is no such separation in our case since each solution is inherently different from others. 
% Further, unlike us where we are only interested in only one of the solutions, video prediction task may require us to reason about all possible solutions, depending on how uncertainty is modeled. \todo{Skip last line?}
\fi

%\textbf{Neural Program Synthesis:}
Another line of work, which comes close to ours is the task of Neural Program Synthesis~\cite{devlin&al17, bunel&al18}. Given a set of Input-Output (IO) pairs, the goal is to generate a valid program conforming to the IO specifications. 
For a given IO pair, there could be multiple valid programs, and often, training data may only have one (or a few) of them.  \citet{bunel&al18} propose a solution where they define an alternate RL based loss using the correctness of the generated program on a subset of held out IO pairs as reward. 
In our setting, in the absence of the constraints (or rules) of the CSP, there is no such additional signal available for training outside the subset of targets $\tx$ for an input $\q$.

%Unfortunately, there is no way for us to establish the correctness of a prediction outside the available subset of targets $\tx$ for an input $\q$ and hence such an RL based loss won't apply in our case.
%\todo{Unfortunately, there is no such alternate reward available in our case, which would equally (not) penalize all correct solutions, and hence we need to explicitly pick the solution we would like to train with for a given input, using our selection module.}

It would also be worthwhile to mention other tasks such as Neural Machine translation~\cite{bahdanau&al14,sutskever&al14},
%, luong&al15}, %dahlmann&al17},
Summarization~\cite{nallapati&al17, paulus&al18}, Image Captioning~\cite{vinyals&al17, you&al16} \etc, where one would expect to have multiple valid solutions for any given input. 
\Eg, for a given sentence in language A, there could be multiple valid translations in language B. To the best of our knowledge, existing literature ignores solution multiplicity in such problems, and simply trains on all possible given labels for any given input.

\textbf{Models for Symbolic Reasoning: }
Our work follows the line of recent research, which proposes neural architectures for implicit symbolic and relational reasoning problems \cite{santoro&al18, palm&al18, wang&al19, dong&al19}. 
%, cingillioglu&russo19}.
We experiment with two architectures as base prediction networks: Neural Logic Machines (NLMs)~\cite{dong&al19}, and Recurrent Relational Networks (RRNs)~\cite{palm&al18}.
NLMs  allow learning of first-order logic rules expressed as Horn Clauses over a set of predicates, making them amenable to transfer over different domain sizes. 
The rules are instantiated over a given set of objects, where the groundings are represented as tensors in the neural space over which logical rules operate. 
RRNs use a graph neural network to learn relationships between symbols represented as nodes in the graph, and have been shown to be good at problems that require multiple steps of symbolic reasoning. 
%In our work, we experiment with both NLM and RRN as base prediction networks.

\section{Theory and Algorithm}
\vspace{-1ex}
\subsection{Problem Definition}
\vspace{-1ex}
\noindent \textbf{Notation:} 
%\st{We will first introduce some notation for clarity.} Let $\q \in \qspace$ represent an input (query) vector. 
Each possible solution (target) for an input (query) $\q$ is denoted by an r-dimensional vector $\target \in \targetspace$, where each element of $\target$ takes values from a discrete space denoted by $\dspace$.  Let $\yspace = \targetspace$, and let $\sx$ denote the set of all solutions associated with input $\q$. We will use the term {\em solution multiplicity} to refer to the fact that there could be multiple possible solutions $\target$ for a given input $\q$. In our setting, the solutions in $\sx$ span a structured combinatorial subspace of $\targetspace$, and can be thought of as representing solutions to an underlying Constraint Satisfaction Problem (CSP). For example in N-Queens, $\q$ would denote a partially filled board, and $\target$ denote a solution for the input board. 

Given a set of inputs $\qi$ along with a subset of  associated solutions $\txi\subseteq\sxi$, \ie, given a set of $(\qi,\txi)$ pairs,
%\sout{Given a set of inputs $\q$, and a subset of associated solutions $\target$ for each input $\q$,}
we are interested in learning a mapping from $\q$ to any one $\target$ among many possible solutions for $\q$.  
%More formally, let $\traindata = \{\trainsample\}_{\vec{i}=1}^{m}$ represent a training dataset of size $m$, where $\txi \subseteq \sxi$ denotes a subset of solutions for the input $\q$ available in the training data. 
Formally, we define the {\em One-of-Many-Learning (\lp)} problem as follows.
\begin{definition}
Given training data $\traindata$ of the form, $\{\trainsample\}_{\vec{i}=1}^{m}$, where $\txi$ denotes a subset of solutions associated with input $\qi$, and $m$ is the size of training dataset, {\em One-of-Many-Learning (\lp)} is defined as the problem of learning a function $f$ such that, for any input $\q$, $f(\q)=\target$ for some $\target \in \sx$, where $\sx$ is the set of all solutions associated with $\q$.
\end{definition}
%\ycom{
%Recall that function $f$ is non-autoregressive, \ie, each dimension (out of $r$) is predicted independent of each other.
%}
%\vspace{-0.5ex}
We use parameterized neural networks to represent our mapping function. 
We use $M_\Theta$ to denote a 
non-autoregressive 
network 
%(model) 
$M$ with associated set of parameters $\Theta$. 
We use $\predi$ ($\pred$) to denote the network output corresponding to input $\qi$ ($\q$), \ie, $\predi$ ($\pred$) is the $\argmax$ of the learnt conditional distribution over the output space $\yspace$ given the input $\qi$ ($\q$).
%\ycom{
%Note that for non-autoregressive networks, the model probability assigned to a $\target \in \yspace ( = \targetspace)$ can be factored into product of $r$ terms, \ie, $Pr(\target \vert \q;\Theta) = \prod_{k=1}^r Pr(\target[k] \vert \q;\Theta)$.  
%}
We are interested in finding a $\Theta^*$ that solves the \lp\ problem as defined above. Next, we consider various formulations for the same. 

%Our learning objective is to find the optimal parameters, $\Theta^*$ of a neural model, $\model_\Theta$, such that for a given input query $\q$, the model predicts \emph{any-one} solution from the multiple solutions $\sx$, \ie, $\model_{\Theta^*}(\q) \in \sx$.
%Given the parameters $\Theta$ of the model.

%Further, $\pred_i$ denotes the prediction of the model for query $\q_i$. 
%We use the word `target' and `solution' interchangeably.
\vspace{-1ex}
\subsection{Objective Function}
\label{subsec:objective}
%Our objective is to find the optimal parameters $\Theta^*$ for the network $\model_\Theta$.
\noindent
{\bf \Naive\ Objective}: In the absence of solution multiplicity, \ie\, when target set $\txi = \{ \targeti\}$, $\forall \itr$, the standard method to train such models is to minimize the total loss, $\totalloss(\Theta) = \sum_{\itr=1}^{m} \loss_\Theta(\predi, \targeti)$, where $\loss_\Theta(\predi, \targeti)$ is the loss between the prediction $\predi$ and the unique target $\targeti$ for the input $\qi$. 
We find the optimal $\Theta^*$ as $\text{arg}\!\min_{\Theta} \totalloss(\Theta)$. %For discrete spaces, 
A \Naive\ extension of this for \lp\ would be to sum the loss over all targets in $\tx$, \ie,  minimize the following loss function:
\vspace{-1ex}
\begin{equation}
\label{eq:loss1}
 \totalloss(\Theta) =\frac{1}{m} \sum\limits_{\itr=1}^{m} \sum\limits_{\targetij \in \txi } \loss_\Theta(\predi, \targetij)
\end{equation}
%$\loss_\Theta$ is usually the cross entropy loss. 
%\vspace{-0.5ex}
We observe that loss function in ~\cref{eq:loss1} would unnecessarily penalize the model when dealing with solution multiplicity. Even when it is correctly predicting one of the targets for an input $\qi$, the loss with respect to the other targets in $\txi$ could be rather high, hence misguiding the training process. \Cref{ex:xor} below demonstrates such a case. 
For illustration, we will use the cross-entropy loss, \ie, $\loss_\Theta(\pred,\target) = -\sum_k \sum_l \mathbbm{1}\{\target[k]=v_l\} \log(P(\pred[k]=v_l))$, where $v_l \in \dspace$ varies over the elements of $\dspace$, and $k$ indices over $r$ dimensions in the solution space. $\target[k]$ denotes the $k^{th}$ element of $\target$.
%, and similarly for $\pred[k]$.

\begin{example}
\label{ex:xor}
Consider a learning problem over a discrete (Boolean) input space $\qspace=\{0,1\}$ and Boolean target space in two dimensions, \ie, $\yspace = \targetspace = \{0,1\}^2$. Let this be a trivial learning problem where $\forall \q$, the solution set is $\tx = \{(0,1), (1,0)\}$. Then, given a set of examples $\{\qi,\txi\}$, 
%where $Y_i=\{(0,1), (1,0)\}, \forall i$, 
the \Naive\ objective (with $l_\Theta$ as cross entropy)  will be minimized, when  $P(\predi[k]=0)=P(\predi[k]=1)=0.5$, for $k \in \{1,2\}$, $\forall \itr$, which can not recover either of the desired solutions: $(0,1)$ or $(1,0)$. 
\end{example}
%Intuitively, 
The problem arises from the fact that when dealing with \lp, the training loss defined in \cref{eq:loss1} is no longer a consistent predictor of the generalization error as formalized below.
\begin{restatable}{lemma}{inconsistent}
\label{lem:inconsistent}
The training loss $\totalloss(\Theta)$ as defined in \cref{eq:loss1} is an inconsistent estimator of generalization error for \lp, when $\loss_\Theta$ is a zero-one loss, i.e.,  $\loss_\Theta(\predi, \targetij) = \mathbbm{1}\{\predi \neq \targetij\}$. \it{(Proof in \hyperref[proof:inconsistent]{Appendix})}.
%The training loss $\totalloss(\Theta)$ as defined in \cref{eq:loss1}, where $\loss_\Theta(\predi, \targetij)$ is assumed to be the zero-one loss, is an inconsistent estimator of generalization error for \lp\ under solution multiplicity.
\end{restatable}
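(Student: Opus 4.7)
The plan is to exhibit a $\Theta$ at which the asymptotic (large-$m$) limit of the naive training loss $L(\Theta)$ disagrees with the appropriate $1$oML generalization error, so that by the law of large numbers $L(\Theta)$ cannot be a consistent estimator of it. First I would fix the correct notion of risk for $1$oML: since any $\target \in \sx$ counts as a correct prediction, the zero-one generalization error is $L_{gen}(\Theta) = \mathbb{E}_\q[\mathbbm{1}\{f_\Theta(\q) \notin \sx\}]$. By linearity of expectation, the limit of $L(\Theta)$ is $\mathbb{E}_\q\!\left[\sum_{\target \in \tx}\mathbbm{1}\{f_\Theta(\q) \neq \target\}\right]$, so it suffices to find a parameter at which these two quantities are unequal.

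A single-instance counter-example already breaks pointwise consistency: take an input $\q$ with $\tx = \{a,b\} = \sx$ and any $\Theta$ for which $f_\Theta(\q) = a$. The limiting naive loss evaluates to $\mathbbm{1}\{a\neq a\} + \mathbbm{1}\{a\neq b\} = 1$, while $L_{gen}(\Theta) = \mathbbm{1}\{a \notin \sx\} = 0$. For the stronger reading, namely that empirical risk minimization under $L$ does not even recover a minimizer of $L_{gen}$, I would construct a two-input variant of \Cref{ex:xor} in which $|\tx_i|$ varies across inputs. For instance: two equally likely inputs $\q_1, \q_2$ with $\tx_1 = \{a\}$, $\sx_1 = \{a,b\}$ and $\tx_2 = \{c,d\}$, $\sx_2 = \{b,c,d\}$, restricted to constant predictors. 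A direct enumeration shows that $b$ uniquely minimizes $L_{gen}$ with zero error, whereas the minimizers of the limiting naive loss form $\{a,c,d\}$, none of which lies in $\argmin L_{gen}$.

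The only subtle point is that the bias of $L$ relative to $L_{gen}$ must be made $\Theta$-dependent: if $|\tx|$ were constant across inputs and $\tx = \sx$ everywhere, $L$ would still be biased but only by a $\Theta$-independent constant, leaving its argmin unchanged. The counter-example therefore needs either $\tx \subsetneq \sx$ for some input or varying $|\tx_i|$ across inputs; both features encode the mismatch between summing zero-one losses over all targets in $\tx$ and merely testing membership in $\sx$, and drive the desired inconsistency. I do not foresee a substantive technical obstacle beyond this design choice; the argument collapses to identifying the limiting bias of the naive loss and verifying it on a small toy instance.
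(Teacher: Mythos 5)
Your proof is correct and takes essentially the same approach as the paper: the paper also establishes inconsistency by observing that when $\predi \in \txi$ and $|\txi| > 1$, the naive zero-one loss contributes $|\txi| - 1 > 0$ while the true error indicator $\mathbbm{1}\{\predi \notin \sxi\}$ is $0$, which is exactly your single-instance counterexample with $\tx = \sx = \{a, b\}$. Your additional argmin analysis goes beyond what the lemma asks (and note, as a small caveat to your closing remark, that if $\tx = \sx$ for every input the limiting bias equals $\mathbb{E}[|\tx|] - 1$ and is $\Theta$-independent even when $|\tx|$ varies across inputs, so only $\tx \subsetneq \sx$ for some input can actually shift the argmin).
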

\vspace{-1ex} 

 For the task of PLL, \citet{jin&al02} propose a modification of the cross entropy loss to tackle multiplicity of labels in the training data. Instead of adding the log probabilities, it maximizes the log of total probability over the given target set. 
Inspired by \citet{feng&al20}, we call it \ccloss: 
%\begin{restatable}{equation}{cclosseqn}
%\label{eq:ccloss}
 $ \totalloss_{cc}(\Theta) =-\frac{1}{m} \sum_{\itr=1}^{m} \log\left(\sum_{\targetij \in \txi } Pr\left( \targetij | \qi;\Theta\right)\right)
 $.
%\end{restatable}
However, in the case of structured prediction, optimizing $\totalloss_{cc}$  requires careful implementation due to its numerical instability (see \hyperref[subsec:cclossissues]{Appendix}). Moreover, for non-autoregressive models, \ccloss\ also suffers from the same issues illustrated in \cref{ex:xor} for \naive\ objective.
%,albeit due to limited representation power of non-autoregressive models for structured prediction (discussed below).

\noindent
{\bf New Objective:} 
We now motivate a better objective function based on an unbiased estimator. % \sout{described in the proof}.
% \todo{Moved line into "New objective" section}
In general, we would like $\model_\Theta$ to learn a conditional probability distribution $Pr(\target|\qi; \Theta)$ over the output space $\yspace$ such that the entire probability mass is concentrated on the desired solution set $\txi$, \ie, $\sum_{\targetij \in \txi} Pr(\targetij|\qi;\Theta) = 1$, $\forall \itr$. If such a conditional distribution is learnt, then we can easily sample a $\targetij \in \txi$ from it. 
\ccloss\ is indeed trying to achieve this.
However, ours being a structured output space, it is intractable to represent all possible joint distributions over the possible solutions in $\txi$, especially for non-autoregressive models\footnote{Autoregressive models may have the capacity to represent certain class of non-trivial joint distributions, \eg, $Pr(y[1],y[2] \vert x)$ could be modeled as $Pr(y[1] \vert x) Pr(y[2] \vert y[1];x)$, but requires sequential decoding during inference. Studying the impact of solution multiplicity on autoregressive models is beyond the current scope.}.

Hence, we instead design a loss function which forces the model to learn a distribution in which the probability mass is concentrated on \emph{any one} of the targets $\targetij \in \txi$. We call such distributions as one-hot. To do this, we introduce $|\txi|$ number of new learnable Boolean parameters, $\wi$, for each query $\qi$ in the training data, and correspondingly define the following loss function:
\vspace{-0.5ex}
\begin{equation}
\label{eq:loss_w}
\totalloss_\wt(\Theta,\wt) =\frac{1}{m} \sum\limits_{\itr=1}^{m} \sum\limits_{\targetij \in \txi } \wij \loss_\Theta(\predi, \targetij) \end{equation}
Here, $\wij \in \{0,1\}$ and $\sum_\jtr \wij = 1, \forall \itr$, where $\jtr$ indices over solutions $\targetij \in \txi$. The last constraint over  Boolean variables $\wij$ enforces that exactly one of the weights in $\wi$ is $1$ and all others are zero. 
%\begin{cor}
\begin{restatable}{lemma}{consistent}
%\begin{lemma}
\label{lem:consistent}
Under the assumption $\txi=\sxi,\forall \itr$, the loss $\totalloss'(\Theta) = \min_{\wt} \totalloss_\wt(\Theta,\wt)$, 
%\ie, 
defined as the minimum value of  $\totalloss_\wt(\Theta,\wt)$ (defined in \cref{eq:loss_w}) with respect to $\wt$,
is a consistent estimator of generalization error for \lp, when $\loss_\Theta$ is a zero-one loss, i.e.,  $\loss_\Theta(\predi, \targetij) = \mathbbm{1}\{\predi \neq \targetij\}$.
%Under the assumption $\txi=\sxi,\forall \itr$, the minimum value of the loss $\totalloss_\wt(\Theta,\wt)$ (as defined in \cref{eq:loss_w}) with respect to $\wt$, \ie\ $\min_{\wt} \totalloss_\wt(\Theta,\wt)$, where $\loss_\Theta(\predi, \targetij)$ is assumed to be the zero-one loss, is a consistent estimator of generalization error for \lp.
%\end{lemma}
\end{restatable}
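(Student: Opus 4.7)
The plan is to reduce $\totalloss'(\Theta)$ to a manifestly unbiased estimator of $\varepsilon(M_\Theta)$ and then invoke the (weak) law of large numbers. The first and only substantive step is to carry out the inner minimization over $\wt$ in closed form. Fix any $\itr$. The feasible set for $\wi$ is $\{\wi \in \{0,1\}^{|\txi|} : \sum_\jtr \wij = 1\}$, i.e., the vertices of the probability simplex. For zero-one loss, $\loss_\Theta(\predi,\targetij) = \mathbbm{1}\{\predi \neq \targetij\}$, so the inner per-example objective $\sum_\jtr \wij \mathbbm{1}\{\predi \neq \targetij\}$ equals $\mathbbm{1}\{\predi \neq \targetij\}$ for whichever $\jtr$ receives weight $1$. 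The minimum is therefore $0$ when there exists some $\targetij \in \txi$ with $\predi = \targetij$ (put all mass on it), and $1$ otherwise. In one line,
\begin{equation*}
\min_{\wi}\,\sum_{\targetij \in \txi}\wij\,\mathbbm{1}\{\predi \neq \targetij\} \;=\; \mathbbm{1}\{\predi \notin \txi\}.
\end{equation*}

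Summing over $\itr$, I obtain $\totalloss'(\Theta) = \frac{1}{m}\sum_{\itr=1}^{m}\mathbbm{1}\{\predi \notin \txi\}$. Under the stated assumption $\txi = \sxi$ for every $\itr$, this is exactly $\hat{\varepsilon}_\traindata(M_\Theta) = \frac{1}{m}\sum_{\itr=1}^{m}\mathbbm{1}\{\predi \notin \sxi\}$, the empirical-average form of the generalization error that appears in the sketch of Lemma~\ref{lem:inconsistent}. Since the training pairs $(\qi,\sxi)$ are i.i.d.\ draws from $\trainspace$ and $\mathbbm{1}\{\predi \notin \sxi\}$ is a bounded function of a single draw with mean equal to $\varepsilon(M_\Theta)$, the weak law of large numbers gives $\totalloss'(\Theta) \xrightarrow{p} \varepsilon(M_\Theta)$ as $m \to \infty$, establishing consistency.

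There is no real obstacle, but two small points are worth being explicit about. First, one should note that although $\wt$ is written as a parameter that is jointly optimized with $\Theta$, for fixed $\Theta$ the minimization over $\wt$ decouples across $\itr$ because each $\wi$ appears in exactly one term of the sum in \cref{eq:loss_w}; this is what justifies the term-by-term reduction above. Second, the consistency claim is for a fixed $\Theta$, i.e., the empirical risk of a given network converges in probability to its population risk, which is the standard sense in which ``estimator of generalization error'' should be read here (no uniform-convergence or ERM argument is needed). With those observations in place, the proof is essentially the one-line computation of the inner minimum followed by an appeal to the law of large numbers.
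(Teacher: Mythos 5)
Your proof is correct and takes essentially the same route as the paper's: for fixed $\Theta$ the minimization over $\wt$ decouples across examples, the per-example minimum of the weighted zero-one loss equals $\mathbbm{1}\{\predi \notin \txi\}$, and hence under $\txi=\sxi$ the loss $\totalloss'(\Theta)$ coincides exactly with the empirical error $\hat{\varepsilon}_{\traindata}(M_\Theta)$. The only difference is that you spell out the final law-of-large-numbers step giving convergence to $\varepsilon(M_\Theta)$, which the paper leaves implicit after identifying $\totalloss'(\Theta)$ with the unbiased estimator.
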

%\end{cor}
% \begin{proof}
%Please see \hyperref[proof:consistent]{Appendix} for details.
% \end{proof}
%Proof of Lemma~\ref{lem:consistent} follows from definition of loss in ~\cref{eq:loss_w}. 
%Please see \hyperref[proof:consistent]{Appendix} for details.
%Lemma~\ref{lem:consistent} follows from definition of loss in ~\cref{eq:loss_w}. 
\vspace{-1ex}
We refer to \hyperref[proof:consistent]{Appendix} for details. Next, we define our new objective as:
\vspace{-1ex}
\begin{equation}
\label{eq:objective}
    \min\limits_{\Theta,\wt} \totalloss_\wt(\Theta, \wt) \text{\ \ s.t.}
    \ \wij \in \{0,1\} \ \forall \itr, \forall \jtr \text{ and }  \ 
    \sum\limits_{\jtr=1}^{|\txi|} \wij = 1, \forall \itr=1 \ldots m
\end{equation}
\vspace{-2ex}

\subsection{Greedy Formulation: \minloss}
\label{subsec:minloss}
In this section, we present one possible way to optimize our desired objective $\min_{\Theta,\wt} \totalloss_\wt(\Theta, \wt)$. 
%We treat the optimization of this objective as an 
It alternates between optimizing over the $\Theta$ parameters, and optimizing over $\wt$ parameters. %\sout{While $\Theta$ parameterized are optimized using SGD, within each iteration, for a given set of $\Theta^{(t)}$ parameters, to optimize $\wt$, it assigns a non-zero weight only to the solution corresponding to minimum loss amongst all the possible $\target_{ij} \in \tx_i$ for each $i = 1 \ldots m$, \ie,}
While $\Theta$ parameters are optimized using SGD, the weights $\wt$ are selected greedily for a given $\Theta = \Theta^t$ at each iteration, \ie, it assigns a non-zero weight to the solution corresponding to the minimum loss amongst all the possible $\targetij \in \txi$ for each $\itr = 1 \ldots m$:
\begin{equation}
   \label{eq:wsoln}
    \wij^{(t)} = \mathbbm{1}\left\{\targetij = \text{arg}\!\!\!\min\limits_{\target \in \txi} \loss_{\Theta^{(t)}}\left(\predi^{(t)},\target\right) \right\},
    \forall \itr = 1 \ldots m
\end{equation}
This can be done by computing the loss with respect to each target, and picking the one which has the minimum loss. We refer to this approach as $\minloss$. Intuitively, for a given set of $\Theta^{(t)}$ parameters, $\minloss$ greedily picks the weight vector $\wi^{(t)}$, and uses them to get the next set of $\Theta^{(t+1)}$ parameters using SGD update.
\begin{equation}
\label{eq:minloss_grad}
    \Theta^{(t+1)} \leftarrow \Theta^{(t)} -\alpha_\Theta \nabla_\Theta \totalloss_\wt\left(\Theta,\wt\right) |_{\Theta=\Theta^{(t)}, \wt=\wt^{(t)}}
\end{equation}

\begin{wrapfigure}{r}{0.53\textwidth}
\vspace*{-3.0ex}
  \begin{center}
        \includegraphics[width=0.52\textwidth]{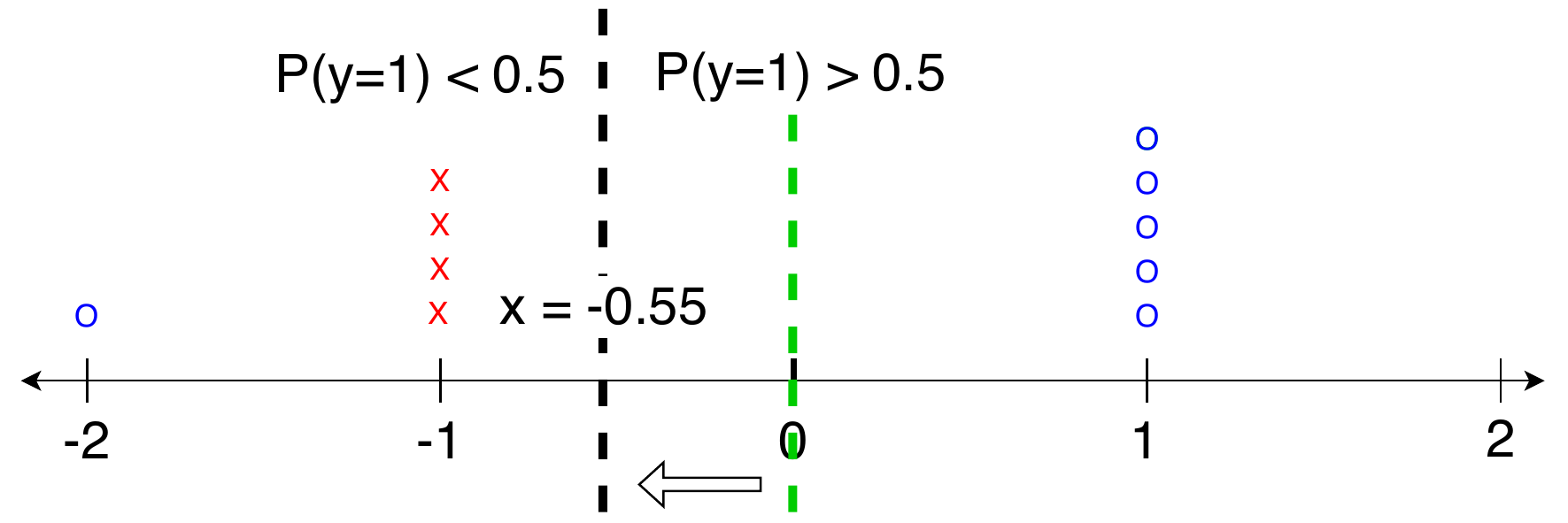}
    \end{center}
    \vspace*{-3ex}
    \captionsetup{justification=centering}
    
    \caption{\small Decision Boundary learnt by logistic regression guided by \minloss. Green line at $\q=0$ is the initial decision boundary and black vertical line at $\q = -0.55$ is the decision boundary at convergence.}
        \label{fig:log_reg_wrap}
\vspace*{-3ex}
\end{wrapfigure}

One significant challenge with \minloss\ is the fact that it chooses the current set of $\wt$ parameters independently for each example based on current $\Theta$ values. While this way of picking the $\wt$ parameters is optimal if $\Theta$ has reached the optima, \ie\ $\Theta=\Theta^{*}$, it can lead to sub-optimal choices when both $\Theta$ and $\wt$ are being simultaneously trained. Following example illustrates this.
\vspace{2ex}
\begin{restatable}{example}{exminloss}
\label{ex:minloss}
Consider a simple task with a one-dimensional continuous input space $\qspace \subset \mathcal{R}$, and target space $\yspace = \{0,1\}$. Consider learning with 10 examples, given as ($\q=1, \tx=\{1\}$) (5 examples), $(\q=-1, \tx=\{0,1\})$ (4 examples), ($\q=-2, \tx=\{1\})$ (1 example). The optimal decision hypothesis is given as: $\target=\mathbbm{1}\{\q > \alpha\}$, for $\alpha \le -2$, or $\target=\mathbbm{1}\{\q < \beta\}$, for $\beta  \ge 1$. Assume learning this with logistic regression using \minloss\ as the training algorithm optimizing the objective in \cref{eq:objective}.
If we initialize the parameters of logistic such that the starting hypothesis is given by $\target=\mathbbm{1}\{\q > 0\}$ (logistic parameters: $\theta_1=0.1$, $\theta_0=0$), \minloss\ will greedily pick the target $\target=0$ for samples with $\q=-1$, repeatedly. This will result in the learning algorithm converging to the decision hypothesis $\target=\mathbbm{1}\{\q > -0.55\}$, which is sub-optimal since the input with $\q=-2$ is incorrectly classified (\cref{fig:log_reg_wrap}, see \hyperref[ex:minlossdetails]{Appendix} for a detailed discussion).
\end{restatable}
\vspace{-0.5ex}

%In the above example, 
\minloss\ is not able to achieve the optimum since it greedily picks the target
%independently 
for each query $\qi$ based on current set of parameters and gets stuck in local mimima. This 
%will be 
is addressed in the next section.

\subsection{Reinforcement Learning Formulation: \ourformulation}
\label{subsec:rlformulation}
In this section, we will design a training algorithm that fixes some of the issues observed with \minloss. Considering the Example~\ref{ex:minloss} above, the main problem with \minloss\ is its inability to consider alternate targets which may not be greedily optimal at the current set of parameters. 
%At the initial choice of $\theta$ parameters for logistic, since the probability of assigning the target $\target=0$ for inputs $\q=-1$ is more than $0.5$, \minloss\ picks the $\wt$ vector (Equation~\ref{eq:objective}) whose component is non-zero for target $\target=0$, and gets stuck in a bad local minima, by repeatedly making the same choice in every learning iteration. 
A better strategy will try to explore alternative solutions as a way of reaching better optima, e.g., in \cref{ex:minloss} we could pick, for the input $\q=-1$, the target $\target=1$ with some non-zero probability, to come out of the local optima. In the above case, this also happens to be the globally optimal strategy.
%Note that in the above example, picking the non-zero weight for the value $\target=1$ for $\q=-1$ also happens to be globally optimal strategy. 
This is the key motivation for our RL-based strategy proposed below. 

\begin{wrapfigure}{r}{0.55\textwidth}
\vspace*{-5.5ex}
  \begin{center}
        \includegraphics[width=0.55\textwidth]{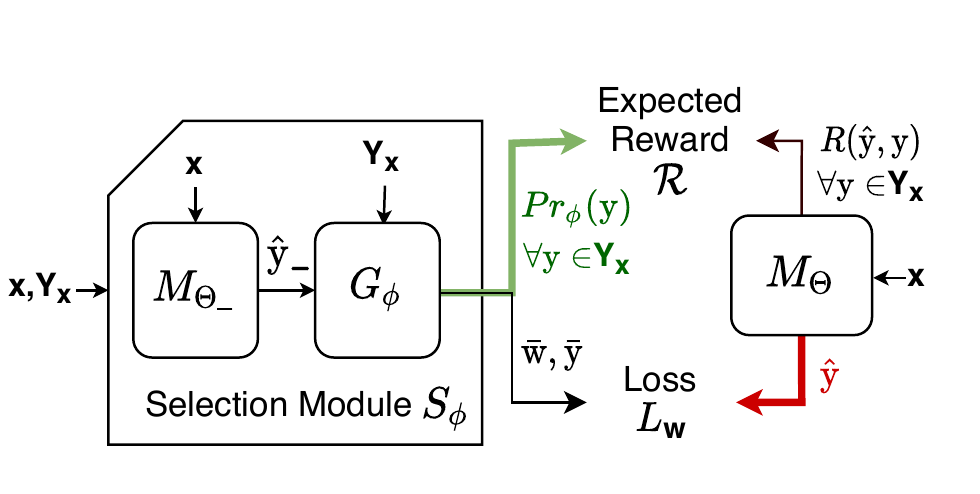}
    \end{center}
    \vspace*{-3ex}
    \captionsetup{justification=centering}
    \caption{Flow-diagram for our RL Framework}
        \label{fig:arch}
\vspace*{-2ex}
\end{wrapfigure}

A natural questions arises: how should we assign the probability of picking a particular target? 
A \naive\ approach would use the probability assigned by the underlying $M_\Theta$ network as a way of deciding the amount of exploration on each target $\target$. We call it \simpleselectr.
We argue below why this may not always be an optimal choice. 

We note that the amount of exploration required may depend in complex ways on the global solution landscape, as well as the current set of parameters. 
Therefore, we propose a strategy, which makes use of a separate \emph{selection module} (a neural network), which takes as input, the current example $(\qi,\txi)$, and outputs the probability of picking each target for training $\Theta$ in the next iteration. 
%\ycom{In the process, the selection module also uses the prediction, ${\predi}\_$, for $\qi$ using a past copy, $\Theta\_$, of the model parameters. 
%We use a past copy to introduce stability, as is common in many RL settings \cite{sutton&barto18}.
%} 
%by the way of non-linear transformation over prediction $\predi$ (obtained using $M_\Theta\_$), and the solution set $\txi$. 
Our strategy is RL-based since, we can think of choosing each target (for a given input) as an action that our selection module needs to take. 
%\sout{The reward is determined in terms of net prediction loss incurred using the specific choice of target.}
Our selection module is trained using a reward that captures the quality of selecting the corresponding target for training the prediction network.
%It turns out that the rewards in our case are differentiable as a function of the parameters of the selection module, and therefore, we can directly optimize for the expected reward.% (rather than using more sophisticated techniques like REINFORCE). 
We next describe its details. 
%the details of our approach.

%Therefore, our RL based strategy makes use of a separate \emph{selection module} (a neural network), which takes as input, the current example $(\qi,\txi)$, as well as the current set of parameters $\Theta$ (or a past copy $\Theta\_$ to introduce stability), and outputs the probability of picking each target for training $\Theta$ in the next iteration by the way of non-linear transformation over prediction $\predi$ (obtained using $M_\Theta\_$), and the solution set $\txi$. We next describe the details of our approach.

\noindent
{\bf Selection Module ($\selectionmodule_\phi$):} This is an RL agent or a policy network where the action is to select a target, $\targetij \in \txi$, for each $\qi$. 
%\todo{\pcom{NOTE: In our current formulation, we do this independently for each example. Should we mention this explicitly?}} 
%This policy network learns a probability distribution for each action (\ie, picking a target) for every input $\qi$ in the training data. Each action gets a reward, $\rewardi(\rltargeti,\predi)$, that captures the quality of selecting the corresponding $\rltargeti$ in defining the loss for $\model_\Theta$. 
Given a training sample, $\trainsample$, it first internally predicts $\predi\_ = \model_{\Theta\_}(\qi)$, using a past copy of the parameters ${\Theta\_}$. This prediction is then fed as an input along with the target set, $\txi$, to a latent model, $\latentmodel_\phi$, which outputs a probability distribution 
%$ Pr_\phi(\wij = 1), \forall \targetij \in \txi, \ \text{s.t.} \sum_j Pr_\phi(\wij = 1) = 1$.  
$ Pr_\phi(\targetij), \forall \targetij \in \txi, \ \text{s.t.} \sum_{\targetij} Pr_\phi(\targetij) = 1$.  
$\selectionmodule_\phi$ then picks a target $\rltargeti \in \txi$ based on the distribution $Pr_\phi(\targetij)$ and returns a $\rlwi$ such that $\forall \itr$, $\rlwij=1$ if $\targetij=\rltargeti$, and $\rlwij=0$ otherwise. 

{\bf Update of $\phi$ Parameters:}
The job of the selection module is to pick one target, $\rltargeti \in \txi$, for each input $\qi$, for training the prediction network $\model_\Theta$.
If we were given an oracle to tell us which $\rltargeti$ is most suited for training $\model_\Theta$, we would have trained the selection module $S_\phi$ to match the oracle. 
In the absence of such an oracle, we train $S_\phi$ using a reward scheme.
Intuitively, $\rltargeti$ would be a good choice for training $\model_\Theta$, if it is “easier” for the model to learn to predict $\rltargeti$. 
In our reward design, we measure this degree of ease using hamming distance between $\rltargeti$ and $\model_\Theta$'s prediction $\predi$, \ie, 
$\rewardi(\predi,\rltargeti) = \sum_{k=1}^{r} \mathbbm{1}\{\predi[k] = \rltargeti[k]\}$.
We note that there are other choices as well for the reward, \eg, a binary reward, which gives a positive reward of $1$ only if the prediction model $\model_\Theta$ has learnt to predict the selected target $\rltargeti$. Our reward scheme is a granular proxy of this binary reward and makes it easier to get a partial reward even when the binary reward would be $0$.

\begin{comment}
\sout{Since %we do not have gold labels for the weights $\wt$ to train the selection module $\selectionmodule_\phi$, \ie, 
we do not know a-priori which $\rltargeti \in \txi$ is optimal for defining the loss in training of $\model_\Theta$, we train $S_\phi$ using a reward which is defined to be the count of $\predi$ (\ie\,  prediction of $M_\Theta$ on $\qi$) components which match
%, $\predi$ being the prediction of $M_\Theta$ as earlier. %for input $\qi$ when using the weights output by the current $\phi$ network. 
%In our formulation, 
%we define the reward to be the count of $\predi$ components which match
the selected target $\rltargeti$, i.e., $\rewardi(\predi,\rltargeti) = \sum_{k=1}^{r} \mathbbm{1}\{\predi[k] = \rltargeti[k]\}$.}
\end{comment}

The expected reward for RL can then be written as:
% \todo{With this approximation, the support of the distribution is much smaller as it contains only the S elements returned by the beam search. As a result, the sum become tractable and no estimator are needed. We can simply differentiate this new objective function to obtain the gradients of the loss with regards to p and use the chain-rule to subsequently obtain gradient with regards to $\theta$ necessary for the optimization.}
\vspace{-1ex}
\begin{equation}
\label{eq:expreward}
    \expreward(\phi) = 
    \sum\limits_{\itr=1}^{m} \sum\limits_{\targetij \in \txi } Pr_\phi\left(\targetij\right)  \rewardi\left(\predi,\targetij\right)
\end{equation}
%where, $\wij=1$ if $\targetij=\rltargeti$, and $\wij=0$ otherwise. 
%Since our expected reward is a differentiable function of $\phi$ parameters which can be computed tractably, we can directly optimise the expected reward using SGD. 
%Since the expected reward can be computed tractably due to finite actions ($|\txi|$) and deterministic reward for each action, we can directly optimise it using SGD instead of using REINFORCE. 
We make use of policy gradient to compute the derivative of the expected reward with respect to the $\phi$ parameters. Accordingly, update equation for $\phi$ can be written as:
\vspace{-0.7ex}
\begin{equation}
\label{eq:grad_phi}
    \phi^{(t+1)} \leftarrow \phi^{(t)} + \alpha_\phi \nabla_\phi \expreward\left(\phi\right) |_{\phi=\phi^{(t)}}
\end{equation}
{\bf Update of $\Theta$ Parameters:}
Next step is to use the output of the selection module, $\rlwi$ corresponding to the sampled target $\rltargeti$, $\forall \itr$, to train the $M_\Theta$ network. The update equation for updating the $\Theta$ parameters during next learning iteration can be written as:
%back-propagating the loss of $\Theta$ 
%The selection module picks a target $\rltargeti \in \txi$ based on the distribution $Pr_\phi(\rltargeti)$ in the previous step. 
%This %target $\rltargeti$ 
%is then used to back-propagate the loss of $\Theta$ parameters for the next learning iteration:
%in the next learning iteration using:
%Accordingly, our $\Theta$ update \Cref{eq:minloss_grad} changes to:
\vspace{-0.7ex}
\begin{equation}
\label{eq:rltheta_grad}
    \Theta^{(t+1)} \leftarrow \Theta^{(t)} -\alpha_\Theta \nabla_\Theta \totalloss_\wt\left(\Theta, \wt  \right) |_{\Theta=\Theta^{(t)},\wt=\rlwt^{(t)}}
%\vspace{-0.5ex}
\end{equation}
%where $\bar{\wt}^t$ is such that $\forall i$, $\bar{\wij}=1$ if $\targetij=\rltargeti$, and $\bar{\wij}=0$ otherwise.
Instead of backpropagating the loss gradient at a sampled target $\rltargeti$, one could also backpropagate the gradient of the expected loss given the distribution $Pr_\phi(\targetij)$.
In our experiments, we backpropagate through the expected loss since our action space for the selection module $\selectionmodule_\phi$ is tractable.
\Cref{fig:arch} represents the overall framework. In the diagram, gradients for updating $\Theta$ flow back through the red line and gradients for updating $\phi$ flow back through the green line. 

\vspace{-1ex}
\subsection{Training Algorithm}
\label{subsec:algo}
\vspace{-1ex}
%of Prediction Network $\model_\Theta$ and Selection Module $\selectionmodule_\phi$}
We put all the update equations together and describe the key components of our training algorithm below. 
\Cref{algo:jttrain} presents a detailed pseudocode.
\begin{wrapfigure}{L}{0.5\textwidth}
\vspace{-3ex}
\begin{minipage}{0.5\textwidth}
\begin{algorithm}[H]
    \small
    \caption{Joint Training of Prediction Network $\model_\Theta$ \& Selection Module $\selectionmodule_\phi$}
        %$warmup$ and $t$ are hyper-parameters}
    \label{algo:jttrain}
    \setcounter{AlgoLine}{0}
    \nl $\Theta_0 \leftarrow$ \textbf{Pre-train $\Theta$} using \cref{eq:wsoln} and \cref{eq:minloss_grad} \\
    \nl \textbf{In Selection Module (SM):} $\Theta\_ \leftarrow \Theta_0$ \\
    \nl $\phi_0 \leftarrow$ \textbf{Pre-train $\phi$} using rewards from $\model_\Theta$ in \cref{eq:grad_phi} \\
    %\nl Initialize: $\Theta \leftarrow \Theta_0 ; \phi <- \phi_0; \Theta\_ \leftarrow \Theta_0; t \leftarrow 0 $ \\
    \nl Initialize: $t \leftarrow 0 $ \\
    \nl \While{ not converged} 
    {
        \nl $B \leftarrow$ Randomly fetch a mini-batch \\
        %\nl \textbf{Get probabilities:} $\vec{p}_i \leftarrow  \selectionmodule_{\phi^t}(\trainsample), \  \forall i \in B$ \\
        %\nl \textbf{Get probabilities and the updated target subset:} $\vec{p}_i, \tx_i \leftarrow  \selectionmodule_{\phi^t}(\trainsample), \  \forall i \in B$ \\
        \nl \For{$\itr \in B$}
        {
        \nl \textbf{Get weights:} $\wi \leftarrow \selectionmodule_\phi(\trainsample, \Theta\_)$ \\
        \nl \textbf{Get model predictions:} $ \predi \leftarrow \model_{\Theta^t}(\qi)$ \\ 
        \nl \textbf{Get rewards:} $\vec{r}_\itr \leftarrow [\rewardi(\predi,\targetij), \  \forall \targetij \in \txi]$
        }
        \nl \textbf{Update} $\phi$: Use \cref{eq:grad_phi} to get $\phi^{(t+1)}$ \\
        \nl \textbf{Update} $\Theta$: Use \cref{eq:rltheta_grad} to get $\Theta^{(t+1)}$ \\
        \nl \textbf{Update} $\Theta\_ \leftarrow \Theta^{(t+1)}$ \text{if} $t  \% copyitr = 0$ (in SM)\\
        \nl \textbf{Increment} $t \leftarrow t+1$
    }
    %\caption{Algorithm for training with constraints.}
\end{algorithm}
\end{minipage}
%\vspace{2ex}
\end{wrapfigure}

\vspace{-0.5ex}
\noindent \textbf{Pre-training:} It is a common strategy in many RL based approaches to first pre-train the network weights using a simple strategy. Accordingly, we pre-train both the $M_\Theta$ and $S_\phi$ networks before going into joint training. 
%before we start with the joint-training of both the networks, we pre-train the prediction network $\model_\Theta$. 
First, we pre-train $\model_\Theta$.
In our experiments, we observe that in some cases, pre-training $\model_\Theta$ using only those samples from training data $\traindata$ for which there is only a unique solution, \ie,  
%$\Set{\trainsample \in \traindata}{|\txi| = 1}$, 
\{$\trainsample \in \traindata$ s.t. $|\txi| = 1$\} gives better performance than pre-training with \minloss. Therefore, we pre-train using both the approaches and select the better one based on their performance on a held out dev set. 
Once the prediction network is pre-trained, a copy of it is given to the selection module to initialize $\model_\Theta\_$. 
Keeping $\Theta$ and $\Theta\_$ fixed and identical to each other, the latent model, $\latentmodel_\phi$, in the selection module is pre-trained using the rewards given by the pre-trained $\model_\Theta$ and the internal predictions given by $\model_\Theta\_$. 

\noindent \textbf{Joint Training:}
After pre-training, both prediction network $\model_\Theta$ and selection module $\selectionmodule_\phi$ are trained jointly. In each iteration $t$, selection module first computes the weights, $\rlwi^{t}$, for each sample in the mini-batch. The prediction network computes the prediction $\predi^{t}$ and rewards $\rewardi(\predi^{t},\targetij), \forall \targetij \in \txi$. The parameters $\phi^t$ and $\Theta^t$ are updated simultaneously using \cref{eq:grad_phi} and \cref{eq:rltheta_grad}, respectively. The copy of the prediction network within selection module, \ie, $\model_\Theta\_$ in $\selectionmodule_\phi$, is updated with the latest parameters $\Theta^{t}$ after every $copyitr$ updates where $copyitr$ is a hyper-parameter.
\vspace{-1ex}
\section{Experiments}
\vspace{-1ex}
\label{sec:exp}
The main goal of our experiments is to evaluate the four multiplicity aware methods: \ccloss, \minloss, informed exploration (\simpleselectr) and RL based exploration (\ourformulation), when compared to baseline approaches that completely disregard the problem of solution multiplicity. We also wish to assess the performance gap, if any, between queries with a unique solution and those with many possible solutions. To answer these questions, we conduct experiments on three different tasks (N-Queens, Futoshiki \& Sudoku), 
%which are 
trained over two different prediction networks, as described below.\footnote{Further details of software environments, hyperparameters and dataset generation are in the appendix.} 

\vspace{-0.5ex}
\subsection{Datasets and Prediction Networks}
\vspace{-0.5ex}
\noindent \textbf{N-Queens:}  Given a query, \ie, a chess-board of size $N \times N$ and a placement of $k < N$ non-attacking queens on it, the task of N Queens is to place the remaining $N-k$ queens, such that no two queens are attacking each other.  We train a Neural Logic Machine (NLM) model \cite{dong&al19}
%\footnote{Code taken from: \href{https://github.com/google/neural-logic-machines}{https://github.com/google/neural-logic-machines} (see Appendix for architecture details)}  
as the prediction network $\model_\Theta$ for solving queries for this task. To model N-Queens within NLM, we represent a query $\q$ and the target $\target$ as $N^2$ dimensional Boolean vectors with $1$ at locations where a Queen is placed. 
%Accordingly, $\q$ has $1$ on $k$ locations and $\target$ on $N$ locations. 
We use another smaller NLM architecture as the latent model $\latentmodel_\phi$. 
% It's input is also an $N^2$ dimensional vector, $\targetij - \predi\_$, where $\predi\_$ is the prediction from its internal copy of the prediction network, $\model_\Theta\_$.

We train our model on 10--Queens puzzles and test on 11--Queens puzzles, both with 5 placed queens.  This size-invariance in training and test is a key strength of NLM architecture, which we exploit in our experiments. 
To generate the train data, we start with all possible valid 10--Queens board configurations and randomly mask any 5 queens, and then check for all possible valid completions to generate potentially multiple solutions for an input. Test data is also generated similarly. Training and testing on different board sizes ensures that no direct information leaks from test to train. Queries with multiple solutions have 2-6 solutions, so we choose $\txi=\sxi, \forall \qi$.

\noindent{\bf Futoshiki: }
This is a logic puzzle in which we are given a grid of size $N \times N$, and the goal is to fill the grid with digits from $\{1 \dots N\}$ such that no digit is repeated in a row or a column. $k$ out of  $N^2$ positions are already filled in the input query $\q$ and the remaining $N^2-k$ positions need to be filled. 
Further, inequality constraints are specified between some pairs of adjacent grid positions, which need to be honored in the solution. Our prediction network, and latent model use NLM, and the details (described in \hyperref[subsec:exp:fut:details]{Appendix}) are very similar to that of N--Queens. 

%\noindent \textbf{Dataset:}
Similar to N--Queens, we do size-invariant training -- we train our models on $5 \times 5$ puzzles with $14$ missing digits and test on $6 \times 6$ puzzles with $20$ missing digits. 
Similar to N--Queens, we generate all possible valid grids and randomly mask out the requisite number of digits to generate train and test data. For both train and test queries we keep up to five inequality constraints of each type: $>$ and $<$.

\noindent{\bf Sudoku:} We also experiment on Sudoku, which has been used as the task of choice for many recent neural reasoning works \cite{palm&al18,wang&al19}. We use Relational Recurrent Networks (RRN) \cite{palm&al18}
%\footnote{Code taken from: \href{https://github.com/dmlc/dgl/tree/master/examples/pytorch/rrn}{https://github.com/dmlc/dgl/tree/master/examples/pytorch/rrn}} 
as the prediction network since it has recently shown state-of-the-art performance on the task. 
We use a 5 layer CNN as our latent model $\latentmodel_\phi$. 
Existing Sudoku datasets \cite{sudoku2014,sudoku2018}, do not expose the issues with solution multiplicity. In response, we generate our own dataset by starting with a collection of Sudoku puzzles with unique solutions that have 17 digits filled.
%\footnote{Available at https://data.dgl.ai/dataset/sudoku-hard.zip} 
We remove one of the digits, thus generating a puzzle, which is guaranteed to have solution multiplicity.
We then randomly add 1 to 18 of the digits back from the solution of the original puzzle, while ensuring that the query continues to have more than 1 solution.
%\footnote{We identify all solutions to a puzzle using   \url{http://www.enjoysudoku.com/JSolve12.zip}}
This generates our set of multi-solution queries with a uniform distribution of filled digits from 17 to 34.  We mix an equal number of unique solution queries (with same filled distribution). Because some $\qi$s may have hundreds of solutions, we randomly sample $5$ of them from $\sxi$, i.e., $|\txi|\leq 5$ in the train set.
For each dataset, we generate a devset in a manner similar to the test set.
\vspace{-0.5ex}
\begin{table}[ht]
\small
\centering
%\captionsetup{justification=centering}
\caption{Statistics of datasets. `Train', `Test' and task names are abbreviated. Devset similar to test.}\label{tab:stat}
\vspace{-1.0ex}
\begin{tabular}{@{}l|cc|cc|cc@{}}
\toprule
\textbf{}            & 
\textbf{N-Qn (Tr)} & \textbf{N-Qn (Tst)} & \textbf{Futo. (Tr)}       & \textbf{Futo. (Tst)} & \textbf{Sud. (Tr)} & \textbf{Sud. (Tst)} \\ \midrule
%Board size & 10x10 & 11x11 & 5x5 & 6x6 & 9x9 & 9x9 \\
%No. of Cells Filled & 5 & 5 & 14 & 18 & 17-34 & 17-34 \\
\# of queries & 165,744 & 10,000 & 10,000 & 10,000 & 20,000 & 10,000 \\
\%age of \ambtest\ queries & 7.04\% & 16.67\% & 17.05\% & 24.95\% & 50\% & 50\% \\
Avg solns per \ambtest\ query & 2.1 & 2.2 & 2.2 & 2.4 & 13.8 & 13.7 \\ \bottomrule
\end{tabular}
\vspace{-1.5ex}
\end{table}
\vspace{-1.5ex}
\subsection{Baselines and Evaluation Metric}
\vspace{-1.5ex}
Our comparison baselines include: (1) \randomsample: backpropagating $\totalloss(\Theta)$ through each solution independently using \Cref{eq:loss1}, (2) \unique: computing $\totalloss(\Theta)$ only over the subset of training examples that have a unique solution, and (3) \arbit: backpropagating $\totalloss(\Theta)$ through one arbitrarily picked solution $\targeti \in \txi$ for every $\qi$ in the train data, and keeping this choice fixed throughout the training.

\begin{comment}
\ycom{
We compare these baselines with both the variants of RL: \simpleselectr, and \ourformulation.
We also experiment with a simpler version of \ourformulation, where instead of training a separate neural network for exploration probabilities, the selection module explores based on the probabilities assigned by the prediction model to the solutions in  $\txi$.
We call it \simpleselectr.} 
\end{comment}

We separately report performance on two mutually exclusive subsets of test data: \uniquetest: queries with a unique solution, and \ambtest: those with multiple solutions. 
For all methods, we tune various hyperparameters (and do early stopping) based on the devset performance. 
Additional parameters for 
the four multiplicity aware methods
include the ratio of \uniquetest\ and \ambtest\ examples in training.\footnote{Futoshiki and N--Queens training datasets have significant \uniquetest-\ambtest\ imbalance (see Table \ref{tab:stat}), necessitating managing this ratio by undersampling \uniquetest. This is similar to standard approach in class imbalance problems.}
\simpleselectr\ and \ourformulation\ also select the pre-training strategy as described in Section \ref{subsec:algo}.  
For all tasks, we consider the output of  a prediction network as correct only if it is a valid solution for the underlying CSP. No partial credit is given for guessing parts of the output correctly. 
%\ycom{Finally, since the action space for the selection module $\selectionmodule_\phi$ is small in all the three experiments, we backpropagate through the exact expected loss instead of a sample loss while updating $\Theta$.}

%Finally, Sudoku experiments help answer our third question. We restrict the number of available solutions to $5$ for any query in $\traindata$~\footnote{We assume access to all solutions for test and validation dataset.}.
%As a result, available target set, $\tx$, for many of the training queries is a strict subset of the set of all possible solutions $\sx$.

\vspace{-1ex}
\subsection{Results and Discussion}
\vspace{-1ex}
We report the accuracies across all tasks and models in \Cref{tab:acc}. For each setting, we report the mean over three random runs (with different seeds), and also the accuracy on the best of these runs selected via the devset (in the parentheses). We first observe that \randomsample\ and \arbit\ perform significantly worse than \unique\ in all the tasks, not only on \ambtest, but on \uniquetest\ as well. This suggests that, \lp\ models that explicitly handle solution multiplicity, even if by simply discarding multiple solutions, are much better than those that do not recognize it at all.

\begin{table}[]
\vspace{-3.5ex}
\small
\centering
\captionsetup{justification=centering}
\caption{Mean (Max) test accuracy over three runs for multiplicity aware methods compared with baselines. \uniquetest: test queries with only one solution, \ambtest: queries with more than one solution. }
\label{tab:acc}
\resizebox{\columnwidth}{!}{
\begin{tabular}{@{}clccccccc@{}}
\toprule
\multicolumn{1}{l}{} &
   &
  \textbf{\randomsample} &
  \textbf{\arbit} &
  \textbf{\unique} &
  \textbf{\ccloss} &
  \textbf{\minloss} &
  \textbf{\simpleselectr} &
  \textbf{\ourformulation} \\  \midrule \addlinespace[0.8em]
\multirow{3}{*}{{\textbf{N-Queens}}} &
  \textbf{\uniquetest} &
  70.59 (70.56) &
  72.91 (73.86) &
  75.09 (75.76) &
  75.31 (76.19) &
  77.29 (78.00) &
  77.35 (79.01) &
  \textbf{79.73 (80.12)} \\ \addlinespace[0.25em]
 &
  \textbf{\ambtest} &
  55.34 (60.97) &
  61.13 (61.81) &
  66.85 (69.48) &
  75.76 (75.36) &
  77.22 (77.82) &
  79.46 (81.95) &
  \textbf{79.68 (82.37)} \\ \addlinespace[0.25em]
 &
  \cellcolor[HTML]{EFEFEF}\textbf{Overall} &
  \cellcolor[HTML]{EFEFEF}68.04 (68.96) &
  \cellcolor[HTML]{EFEFEF}70.94 (71.85) &
  \cellcolor[HTML]{EFEFEF}73.72 (74.71) &
  \cellcolor[HTML]{EFEFEF}75.39 (76.05) &
  \cellcolor[HTML]{EFEFEF}77.28 (77.97) &
  \cellcolor[HTML]{EFEFEF}77.7 (79.50) &
  \cellcolor[HTML]{EFEFEF}\textbf{79.72 (80.50)} \\ \addlinespace[0.5em] \midrule  \addlinespace[0.5em]
 \multirow{3}{*}{\textbf{Futoshiki}} &
  \textbf{\uniquetest} &
  65.59 (66.8) &
  65.49 (65.22) &
  67.63 (69.49) &
  77.68 (78.36) &
  76.78 (78.24) &
  \textbf{78.15 (77.96)} &
  78.01 (78.36) \\ \addlinespace[0.25em]
 &
  \textbf{\ambtest} &
  14.99 (18.04) &
  14.22 (18.84) &
  19.13 (23.33) &
  69.3 (68.62) &
  70.35 (69.06) &
  70.88 (73.71) &
  \textbf{71.57 (72.42)} \\ \addlinespace[0.25em]
 &
  \cellcolor[HTML]{EFEFEF}\textbf{Overall} &
  \cellcolor[HTML]{EFEFEF}52.96 (54.63) &
  \cellcolor[HTML]{EFEFEF}52.7 (53.65) &
  \cellcolor[HTML]{EFEFEF}55.53 (57.97) &
  \cellcolor[HTML]{EFEFEF}75.59 (75.93) &
  \cellcolor[HTML]{EFEFEF}75.18 (75.95) &
  \cellcolor[HTML]{EFEFEF}76.33 (76.90) &
  \cellcolor[HTML]{EFEFEF}\textbf{76.4 (76.88)} \\ \addlinespace[0.5em] \midrule \addlinespace[0.5em]
 \multirow{3}{*}{{\textbf{Sudoku}}} &
  \textbf{\uniquetest} &
  87.85 (89.08) &
  87.53 (86.24) &
  \textbf{89.19 (90.24)} &
  88.26 (86.78) &
  88.25 (88.22) &
  88.73 (89.62) &
  88.69 (87.94) \\ \addlinespace[0.25em]
 &
  \textbf{\ambtest} &
  09.13 (10.59) &
  13.65 (16.07) &
  66.39 (70.20) &
  76.58 (78.38) &
  76.93 (78.94) &
  80.19 (81.45) &
  \textbf{81.73 (85.45)} \\ \addlinespace[0.25em]
 &
  \cellcolor[HTML]{EFEFEF}\textbf{Overall} &
  \cellcolor[HTML]{EFEFEF}48.49 (49.84) &
  \cellcolor[HTML]{EFEFEF}50.59 (51.15) &
  \cellcolor[HTML]{EFEFEF}77.79 (80.22) &
  \cellcolor[HTML]{EFEFEF}82.42 (82.58) &
  \cellcolor[HTML]{EFEFEF}82.59 (83.58) &
  \cellcolor[HTML]{EFEFEF}84.46 (85.54) &
  \cellcolor[HTML]{EFEFEF}\textbf{85.21 (86.70)} \\ \bottomrule
\end{tabular}
}
\vspace{-2ex}
\end{table}

Predictably, all multiplicity aware methods vastly improve upon the performance of \naive\ baselines, with a dramatic 13-52 pt gains  between \unique\ and \ourformulation\ on queries with multiple solutions.

\begin{wrapfigure}{r}{0.5\textwidth}
\vspace*{-2ex}
    \captionsetup{justification=centering}
    \vspace*{-1ex}
    \caption{Accuracy vs size of query's solution set (with $95\%$ confidence interval)}
    \label{fig:acc}
\vspace*{-2.5ex}
  \begin{center}
        \includegraphics[width=0.48\textwidth]{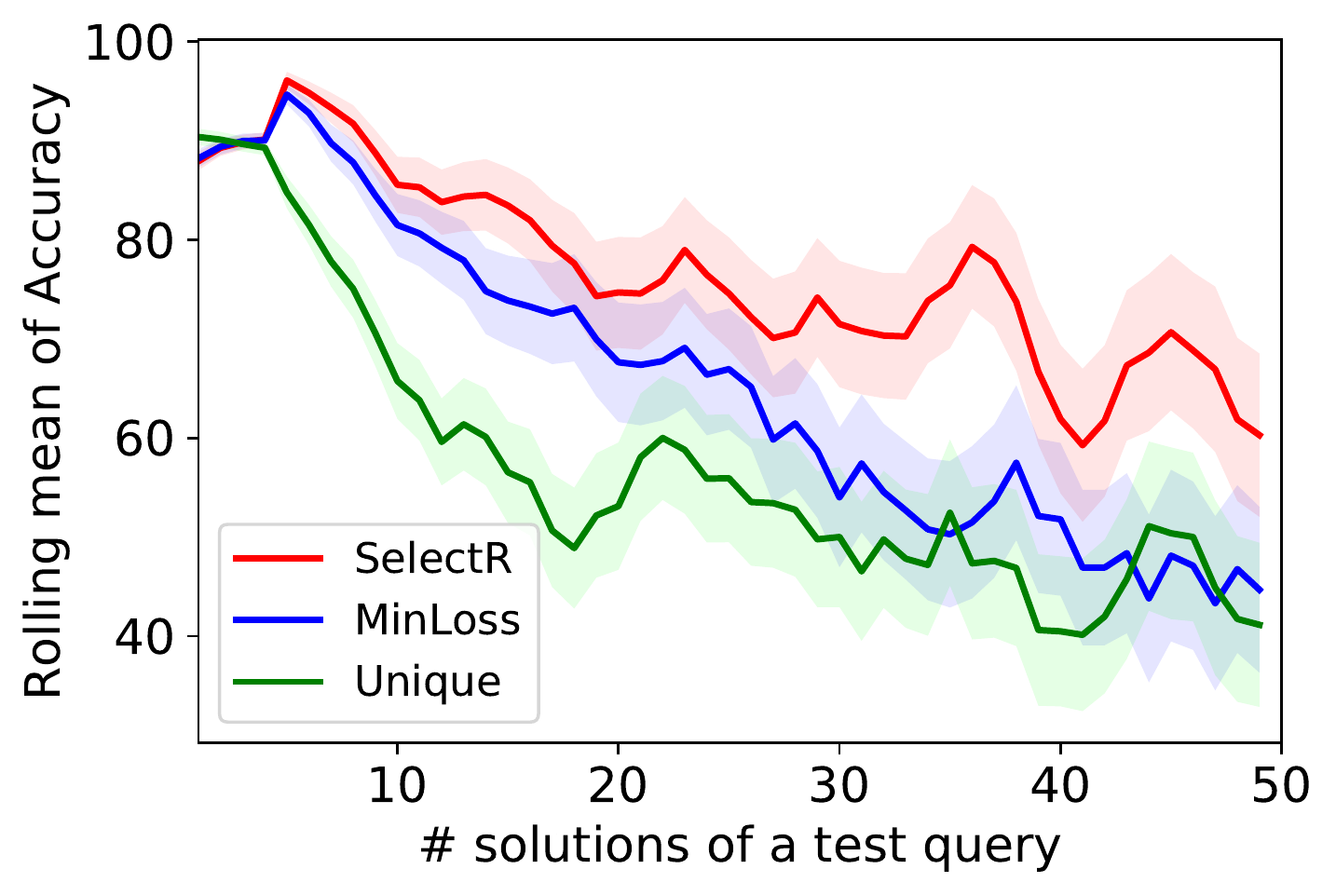}
    \end{center}
\vspace*{-2ex}
\end{wrapfigure}
%Comparing \minloss and \ourformulation, we find that our RL-based approach outperforms \minloss consistently across tasks. 
Comparing \minloss\ and \ourformulation, we find that our RL-based approach outperforms \minloss\ consistently, with p-values (computed using McNemar's test for the best models selected based on validation set) of $1.00\mathrm{e}{-16}$, $0.03$, and $1.69\mathrm{e}{-18}$ for NQueens, Futoshiki and Sudoku respectively (see \hyperref[appendix:results:significance]{Appendix} for seed-wise comparisons of gains across tasks). 
On the other hand, informed exploration technique, \simpleselectr, though improves over \minloss\ on two out of three tasks, it performs worse than \ourformulation\ in all the domains.
This highlights the value of RL based exploration on top of the greedy target selection of \minloss\ as well as over the simple exploration of \simpleselectr.
We note that this is due to more exploratory power of \ourformulation\ over \simpleselectr.
See \hyperref[appendix:results:selectrvsiexplr]{Appendix} for more discussion and experiments 
comparing the two exploration techniques.
%\ourformulation\ and \simpleselectr.
%answering why \ourformulation\ is better than \simpleselectr.

%Comparing \minloss\ with the two exploration based approaches, we find that both of them outperform \minloss\ consistently across tasks. This highlights the value of our selector module on top of the greedy target selection of \minloss.
% Further, a neural selector module based RL approach performs better than \simpleselectr.
%and \ourformulation, we find that our RL-based approach outperforms \minloss\ consistently across tasks. This highlights the value of our selector module on top of the greedy target selection of \minloss.
%\ycom{Further, as expected, \simpleselectr\ lies in between \minloss\ and \ourformulation. This demonstrates the importance of exploration over greedy approach and the fact that \simpleselectr\ is a special instance of \ourformulation\ where  $\selectionmodule_\phi$ focuses only on the probabilities of the prediction network.}

%largely comes from \ambtest. Models trained using  \randomsample\ or \arbit\ approach are unable to solve queries with multiple solutions, while \minloss\ and \ourformulation\ excel on them. 

Recall that Sudoku training set has no more than 5 solutions for a query, irrespective of the actual number of solutions -- i.e, for many $\qi$, $\txi\subsetneq \sxi$. 
Despite incomplete solution set, significant improvement over baselines is obtained, indicating that our formulation handles solution multiplicity even with incomplete information. 
%We notice that performance of \ourformulation\ is in high 80s, indicating that the model is able to learn with incomplete target sets.
Furthermore, the large variation in the size of solution set ($|\sx|$) in Sudoku allows us to assess its effect on the overall performance. We find that all models get worse  as $|\sx|$ increases (\cref{fig:acc}), even though \ourformulation\ remains the most robust (see \hyperref[appendix:results:sizevariation]{Appendix} for details). 
\vspace{-1ex}

%\vspace{-1ex}
\section{Conclusion and Future Work}
\label{conclusion}
\vspace{-1ex}
In this paper, we have defined \lp: the task of learning one of many solutions for combinatorial problems in structured output spaces. 
We have identified solution multiplicity as an important aspect of the problem, which if not handled properly, may result in sub-optimal models. 
As a first cut solution, we proposed a greedy approach: \minloss\ formulation. We identified certain shortcomings with the greedy approach and proposed 
two exploration based formulations:  \simpleselectr\ and an RL formulation,  
\ourformulation, which overcomes some of the issues in \minloss\ by exploring the locally sub-optimal choices for better global optimization. 

Experiments on three different tasks using two different prediction networks demonstrate the effectiveness of our approach in training robust models under solution multiplicity
\footnote{All the code and datasets are available at: \emph{https://sites.google.com/view/yatinnandwani/1oml}}.

It is interesting to note that for traditional CSP solvers, \eg \cite{Selman&al93,Mahajan&al04}, a problem with many solutions will be considered an easy problem, whereas for neural models, such problems appear much harder (\Cref{fig:acc}). As a future work, it will be interesting to combine symbolic CSP solvers with \ourformulation\ to design a much stronger neuro-symbolic reasoning model.

\section*{Acknowledgement}
We thank IIT Delhi HPC facility\footnote{\emph{http://supercomputing.iitd.ac.in}} for computational resources.
We thank anonymous reviewers for their insightful comments and suggestions, in particular \textit{AnonReviewer4} for suggesting a simple yet effective informed exploration strategy (\simpleselectr). 
We also thank Vidit Jain for discussions on numerical instability of \ccloss\ and how to cleverly handle it.
Mausam is supported by grants from Google, Bloomberg,
1MG and Jai Gupta chair fellowship by IIT Delhi.
Parag Singla is supported by the DARPA Explainable Artificial Intelligence (XAI) Program with
number N66001-17-2-4032.
Both Mausam and Parag
Singla are supported by the Visvesvaraya Young Faculty Fellowships by Govt. of India and IBM SUR awards.
Any opinions, findings, conclusions or recommendations
expressed in this paper are those of the authors and do
not necessarily reflect the views or official policies, either expressed or implied, of the funding agencies.

\bibliography{iclr2021_conference.bib}

\begin{thebibliography}{31}
\providecommand{\natexlab}[1]{#1}
\providecommand{\url}[1]{\texttt{#1}}
\expandafter\ifx\csname urlstyle\endcsname\relax
  \providecommand{\doi}[1]{doi: #1}\else
  \providecommand{\doi}{doi: \begingroup \urlstyle{rm}\Url}\fi

\bibitem[Bahdanau et~al.(2015)Bahdanau, Cho, and Bengio]{bahdanau&al14}
Dzmitry Bahdanau, Kyunghyun Cho, and Yoshua Bengio.
\newblock Neural machine translation by jointly learning to align and
  translate.
\newblock In Yoshua Bengio and Yann LeCun (eds.), \emph{3rd International
  Conference on Learning Representations, {ICLR} 2015, San Diego, CA, USA, May
  7-9, 2015, Conference Track Proceedings}, 2015.
\newblock URL \url{http://arxiv.org/abs/1409.0473}.

\bibitem[Bunel et~al.(2018)Bunel, Hausknecht, Devlin, Singh, and
  Kohli]{bunel&al18}
Rudy Bunel, Matthew~J. Hausknecht, Jacob Devlin, Rishabh Singh, and Pushmeet
  Kohli.
\newblock Leveraging grammar and reinforcement learning for neural program
  synthesis.
\newblock In \emph{6th International Conference on Learning Representations,
  {ICLR} 2018, Vancouver, BC, Canada, April 30 - May 3, 2018, Conference Track
  Proceedings}. OpenReview.net, 2018.
\newblock URL \url{https://openreview.net/forum?id=H1Xw62kRZ}.

\bibitem[Cabannes et~al.(2020)Cabannes, Rudi, and Bach]{cabannes&al20}
Vivien Cabannes, Alessandro Rudi, and Francis Bach.
\newblock Structured prediction with partial labelling through the infimum
  loss.
\newblock \emph{CoRR}, abs/2003.00920, 2020.
\newblock URL \url{https://arxiv.org/abs/2003.00920}.

\bibitem[Cour et~al.(2011)Cour, Sapp, and Taskar]{cour&al11}
Timoth{\'{e}}e Cour, Benjamin Sapp, and Ben Taskar.
\newblock Learning from partial labels.
\newblock \emph{J. Mach. Learn. Res.}, 12:\penalty0 1501--1536, 2011.
\newblock URL \url{http://dl.acm.org/citation.cfm?id=2021049}.

\bibitem[Denton \& Fergus(2018)Denton and Fergus]{denton&fergus18}
Emily Denton and Rob Fergus.
\newblock Stochastic video generation with a learned prior.
\newblock In Jennifer~G. Dy and Andreas Krause (eds.), \emph{Proceedings of the
  35th International Conference on Machine Learning, {ICML} 2018,
  Stockholmsm{\"{a}}ssan, Stockholm, Sweden, July 10-15, 2018}, volume~80 of
  \emph{Proceedings of Machine Learning Research}, pp.\  1182--1191. {PMLR},
  2018.
\newblock URL \url{http://proceedings.mlr.press/v80/denton18a.html}.

\bibitem[Devlin et~al.(2017)Devlin, Uesato, Bhupatiraju, Singh, Mohamed, and
  Kohli]{devlin&al17}
Jacob Devlin, Jonathan Uesato, Surya Bhupatiraju, Rishabh Singh, Abdel{-}rahman
  Mohamed, and Pushmeet Kohli.
\newblock Robustfill: Neural program learning under noisy {I/O}.
\newblock In \emph{Proceedings of the 34th International Conference on Machine
  Learning, {ICML} 2017, Sydney, NSW, Australia, 6-11 August 2017}, volume~70
  of \emph{Proceedings of Machine Learning Research}, pp.\  990--998. {PMLR},
  2017.
\newblock URL \url{http://proceedings.mlr.press/v70/devlin17a.html}.

\bibitem[Dong et~al.(2019)Dong, Mao, Lin, Wang, Li, and Zhou]{dong&al19}
Honghua Dong, Jiayuan Mao, Tian Lin, Chong Wang, Lihong Li, and Denny Zhou.
\newblock Neural logic machines.
\newblock In \emph{7th International Conference on Learning Representations,
  {ICLR} 2019, New Orleans, LA, USA, May 6-9, 2019}. OpenReview.net, 2019.
\newblock URL \url{https://openreview.net/forum?id=B1xY-hRctX}.

\bibitem[Evans \& Grefenstette(2018)Evans and Grefenstette]{evans&al18}
Richard Evans and Edward Grefenstette.
\newblock Learning explanatory rules from noisy data.
\newblock \emph{J. Artif. Intell. Res.}, 61:\penalty0 1--64, 2018.
\newblock \doi{10.1613/jair.5714}.
\newblock URL \url{https://doi.org/10.1613/jair.5714}.

\bibitem[Feng et~al.(2018)Feng, Huang, Zhao, Yang, and Zhu]{rcrl}
Jun Feng, Minlie Huang, Li~Zhao, Yang Yang, and Xiaoyan Zhu.
\newblock Reinforcement learning for relation classification from noisy data.
\newblock In \emph{Proceedings of the Thirty-Second {AAAI} Conference on
  Artificial Intelligence, (AAAI-18), the 30th innovative Applications of
  Artificial Intelligence (IAAI-18), and the 8th {AAAI} Symposium on
  Educational Advances in Artificial Intelligence (EAAI-18), New Orleans,
  Louisiana, USA, February 2-7, 2018}, pp.\  5779--5786. {AAAI} Press, 2018.
\newblock URL
  \url{https://www.aaai.org/ocs/index.php/AAAI/AAAI18/paper/view/17151}.

\bibitem[Feng \& An(2019)Feng and An]{feng&al19}
Lei Feng and Bo~An.
\newblock Partial label learning with self-guided retraining.
\newblock In \emph{The Thirty-Third {AAAI} Conference on Artificial
  Intelligence, {AAAI} 2019, The Thirty-First Innovative Applications of
  Artificial Intelligence Conference, {IAAI} 2019, The Ninth {AAAI} Symposium
  on Educational Advances in Artificial Intelligence, {EAAI} 2019, Honolulu,
  Hawaii, USA, January 27 - February 1, 2019}, pp.\  3542--3549. {AAAI} Press,
  2019.
\newblock \doi{10.1609/aaai.v33i01.33013542}.
\newblock URL \url{https://doi.org/10.1609/aaai.v33i01.33013542}.

\bibitem[Feng et~al.(2020)Feng, Lv, Han, Xu, Niu, Geng, An, and
  Sugiyama]{feng&al20}
Lei Feng, Jiaqi Lv, Bo~Han, Miao Xu, Gang Niu, Xin Geng, Bo~An, and Masashi
  Sugiyama.
\newblock Provably consistent partial-label learning.
\newblock In \emph{Advances in Neural Information Processing Systems 33: Annual
  Conference on Neural Information Processing Systems 2020, NeurIPS 2020,
  December 6-12, 2020, virtual}, 2020.
\newblock URL
  \url{https://proceedings.neurips.cc/paper/2020/hash/7bd28f15a49d5e5848d6ec70e584e625-Abstract.html}.

\bibitem[Henaff et~al.(2017)Henaff, Zhao, and LeCun]{henaff&al17}
Mikael Henaff, Junbo~Jake Zhao, and Yann LeCun.
\newblock Prediction under uncertainty with error-encoding networks.
\newblock \emph{CoRR}, abs/1711.04994, 2017.
\newblock URL \url{http://arxiv.org/abs/1711.04994}.

\bibitem[Jin \& Ghahramani(2002)Jin and Ghahramani]{jin&al02}
Rong Jin and Zoubin Ghahramani.
\newblock Learning with multiple labels.
\newblock In Suzanna Becker, Sebastian Thrun, and Klaus Obermayer (eds.),
  \emph{Advances in Neural Information Processing Systems 15 [Neural
  Information Processing Systems, {NIPS} 2002, December 9-14, 2002, Vancouver,
  British Columbia, Canada]}, pp.\  897--904. {MIT} Press, 2002.
\newblock URL
  \url{http://papers.nips.cc/paper/2234-learning-with-multiple-labels}.

\bibitem[Mahajan et~al.(2004)Mahajan, Fu, and Malik]{Mahajan&al04}
Yogesh~S. Mahajan, Zhaohui Fu, and Sharad Malik.
\newblock Zchaff2004: An efficient {SAT} solver.
\newblock In Holger~H. Hoos and David~G. Mitchell (eds.), \emph{Theory and
  Applications of Satisfiability Testing, 7th International Conference, {SAT}
  2004, Vancouver, BC, Canada, May 10-13, 2004, Revised Selected Papers},
  volume 3542 of \emph{Lecture Notes in Computer Science}, pp.\  360--375.
  Springer, 2004.
\newblock \doi{10.1007/11527695\_27}.
\newblock URL \url{https://doi.org/10.1007/11527695\_27}.

\bibitem[McGuire et~al.(2012)McGuire, Tugemann, and Civario]{mcguire&al12}
Gary McGuire, Bastian Tugemann, and Gilles Civario.
\newblock There is no 16-clue sudoku: Solving the sudoku minimum number of
  clues problem via hitting set enumeration.
\newblock \emph{Experimental Mathematics}, 23:\penalty0 190--217, 2012.

\bibitem[Minervini et~al.(2020)Minervini, Bošnjak, Rocktäschel, Riedel, and
  Grefenstette]{minervini&al20}
Pasquale Minervini, Matko Bošnjak, Tim Rocktäschel, Sebastian Riedel, and
  Edward Grefenstette.
\newblock Differentiable reasoning on large knowledge bases and natural
  language.
\newblock In \emph{Proceedings of the Thirty-First {AAAI} Conference on
  Artificial Intelligence}. {AAAI} Press, 2020.

\bibitem[Nallapati et~al.(2017)Nallapati, Zhai, and Zhou]{nallapati&al17}
Ramesh Nallapati, Feifei Zhai, and Bowen Zhou.
\newblock Summarunner: {A} recurrent neural network based sequence model for
  extractive summarization of documents.
\newblock In Satinder~P. Singh and Shaul Markovitch (eds.), \emph{Proceedings
  of the Thirty-First {AAAI} Conference on Artificial Intelligence, February
  4-9, 2017, San Francisco, California, {USA}}, pp.\  3075--3081. {AAAI} Press,
  2017.
\newblock URL \url{http://aaai.org/ocs/index.php/AAAI/AAAI17/paper/view/14636}.

\bibitem[Palm et~al.(2018)Palm, Paquet, and Winther]{palm&al18}
Rasmus~Berg Palm, Ulrich Paquet, and Ole Winther.
\newblock Recurrent relational networks.
\newblock In Samy Bengio, Hanna~M. Wallach, Hugo Larochelle, Kristen Grauman,
  Nicol{\`{o}} Cesa{-}Bianchi, and Roman Garnett (eds.), \emph{Advances in
  Neural Information Processing Systems 31: Annual Conference on Neural
  Information Processing Systems 2018, NeurIPS 2018, 3-8 December 2018,
  Montr{\'{e}}al, Canada}, pp.\  3372--3382, 2018.
\newblock URL
  \url{http://papers.nips.cc/paper/7597-recurrent-relational-networks}.

\bibitem[Park(2018)]{sudoku2018}
Kyubyong Park.
\newblock Can convolutional neural networks crack sudoku puzzles?
\newblock \url{https://github.com/Kyubyong/sudoku}, 2018.

\bibitem[Paulus et~al.(2018)Paulus, Xiong, and Socher]{paulus&al18}
Romain Paulus, Caiming Xiong, and Richard Socher.
\newblock A deep reinforced model for abstractive summarization.
\newblock In \emph{6th International Conference on Learning Representations,
  {ICLR} 2018, Vancouver, BC, Canada, April 30 - May 3, 2018, Conference Track
  Proceedings}. OpenReview.net, 2018.
\newblock URL \url{https://openreview.net/forum?id=HkAClQgA-}.

\bibitem[Qin et~al.(2018)Qin, Xu, and Wang]{rerl}
Pengda Qin, Weiran Xu, and William~Yang Wang.
\newblock Robust distant supervision relation extraction via deep reinforcement
  learning.
\newblock In Iryna Gurevych and Yusuke Miyao (eds.), \emph{Proceedings of the
  56th Annual Meeting of the Association for Computational Linguistics, {ACL}
  2018, Melbourne, Australia, July 15-20, 2018, Volume 1: Long Papers}, pp.\
  2137--2147. Association for Computational Linguistics, 2018.
\newblock \doi{10.18653/v1/P18-1199}.
\newblock URL \url{https://www.aclweb.org/anthology/P18-1199/}.

\bibitem[Rockt{\"{a}}schel et~al.(2015)Rockt{\"{a}}schel, Singh, and
  Riedel]{rocktaschel&al15}
Tim Rockt{\"{a}}schel, Sameer Singh, and Sebastian Riedel.
\newblock Injecting logical background knowledge into embeddings for relation
  extraction.
\newblock In \emph{{NAACL} {HLT} 2015, The 2015 Conference of the North
  American Chapter of the Association for Computational Linguistics: Human
  Language Technologies, Denver, Colorado, USA, May 31 - June 5, 2015}, pp.\
  1119--1129, 2015.
\newblock URL \url{http://aclweb.org/anthology/N/N15/N15-1118.pdf}.

\bibitem[Royle(2014)]{sudoku2014}
Gordon Royle.
\newblock Minimum sudoku.
\newblock \url{https://staffhome.ecm.uwa.edu.au/~00013890/sudokumin.php}, 2014.

\bibitem[Santoro et~al.(2018)Santoro, Faulkner, Raposo, Rae, Chrzanowski,
  Weber, Wierstra, Vinyals, Pascanu, and Lillicrap]{santoro&al18}
Adam Santoro, Ryan Faulkner, David Raposo, Jack~W. Rae, Mike Chrzanowski,
  Theophane Weber, Daan Wierstra, Oriol Vinyals, Razvan Pascanu, and Timothy~P.
  Lillicrap.
\newblock Relational recurrent neural networks.
\newblock In Samy Bengio, Hanna~M. Wallach, Hugo Larochelle, Kristen Grauman,
  Nicol{\`{o}} Cesa{-}Bianchi, and Roman Garnett (eds.), \emph{Advances in
  Neural Information Processing Systems 31: Annual Conference on Neural
  Information Processing Systems 2018, NeurIPS 2018, 3-8 December 2018,
  Montr{\'{e}}al, Canada}, pp.\  7310--7321, 2018.
\newblock URL
  \url{http://papers.nips.cc/paper/7960-relational-recurrent-neural-networks}.

\bibitem[Selman et~al.(1993)Selman, Kautz, and Cohen]{Selman&al93}
Bart Selman, Henry~A. Kautz, and Bram Cohen.
\newblock Local search strategies for satisfiability testing.
\newblock In David~S. Johnson and Michael~A. Trick (eds.), \emph{Cliques,
  Coloring, and Satisfiability, Proceedings of a {DIMACS} Workshop, New
  Brunswick, New Jersey, USA, October 11-13, 1993}, volume~26 of \emph{{DIMACS}
  Series in Discrete Mathematics and Theoretical Computer Science}, pp.\
  521--531. {DIMACS/AMS}, 1993.
\newblock \doi{10.1090/dimacs/026/25}.
\newblock URL \url{https://doi.org/10.1090/dimacs/026/25}.

\bibitem[Sutskever et~al.(2014)Sutskever, Vinyals, and Le]{sutskever&al14}
Ilya Sutskever, Oriol Vinyals, and Quoc~V. Le.
\newblock Sequence to sequence learning with neural networks.
\newblock In Zoubin Ghahramani, Max Welling, Corinna Cortes, Neil~D. Lawrence,
  and Kilian~Q. Weinberger (eds.), \emph{Advances in Neural Information
  Processing Systems 27: Annual Conference on Neural Information Processing
  Systems 2014, December 8-13 2014, Montreal, Quebec, Canada}, pp.\
  3104--3112, 2014.
\newblock URL
  \url{http://papers.nips.cc/paper/5346-sequence-to-sequence-learning-with-neural-networks}.

\bibitem[Tsoumakas \& Katakis(2007)Tsoumakas and Katakis]{tsoumakas&al07}
Grigorios Tsoumakas and Ioannis Katakis.
\newblock Multi-label classification: An overview.
\newblock \emph{{IJDWM}}, 3\penalty0 (3):\penalty0 1--13, 2007.
\newblock \doi{10.4018/jdwm.2007070101}.
\newblock URL \url{https://doi.org/10.4018/jdwm.2007070101}.

\bibitem[Vinyals et~al.(2017)Vinyals, Toshev, Bengio, and Erhan]{vinyals&al17}
Oriol Vinyals, Alexander Toshev, Samy Bengio, and Dumitru Erhan.
\newblock Show and tell: Lessons learned from the 2015 {MSCOCO} image
  captioning challenge.
\newblock \emph{{IEEE} Trans. Pattern Anal. Mach. Intell.}, 39\penalty0
  (4):\penalty0 652--663, 2017.
\newblock \doi{10.1109/TPAMI.2016.2587640}.
\newblock URL \url{https://doi.org/10.1109/TPAMI.2016.2587640}.

\bibitem[Wang et~al.(2019)Wang, Donti, Wilder, and Kolter]{wang&al19}
Po{-}Wei Wang, Priya~L. Donti, Bryan Wilder, and J.~Zico Kolter.
\newblock Satnet: Bridging deep learning and logical reasoning using a
  differentiable satisfiability solver.
\newblock In \emph{Proceedings of the 36th International Conference on Machine
  Learning, {ICML} 2019, 9-15 June 2019, Long Beach, California, {USA}},
  volume~97 of \emph{Proceedings of Machine Learning Research}, pp.\
  6545--6554. {PMLR}, 2019.
\newblock URL \url{http://proceedings.mlr.press/v97/wang19e.html}.

\bibitem[Xu et~al.(2019)Xu, Lv, and Geng]{xu&al19}
Ning Xu, Jiaqi Lv, and Xin Geng.
\newblock Partial label learning via label enhancement.
\newblock In \emph{The Thirty-Third {AAAI} Conference on Artificial
  Intelligence, {AAAI} 2019, The Thirty-First Innovative Applications of
  Artificial Intelligence Conference, {IAAI} 2019, The Ninth {AAAI} Symposium
  on Educational Advances in Artificial Intelligence, {EAAI} 2019, Honolulu,
  Hawaii, USA, January 27 - February 1, 2019}, pp.\  5557--5564. {AAAI} Press,
  2019.
\newblock \doi{10.1609/aaai.v33i01.33015557}.
\newblock URL \url{https://doi.org/10.1609/aaai.v33i01.33015557}.

\bibitem[You et~al.(2016)You, Jin, Wang, Fang, and Luo]{you&al16}
Quanzeng You, Hailin Jin, Zhaowen Wang, Chen Fang, and Jiebo Luo.
\newblock Image captioning with semantic attention.
\newblock In \emph{2016 {IEEE} Conference on Computer Vision and Pattern
  Recognition, {CVPR} 2016, Las Vegas, NV, USA, June 27-30, 2016}, pp.\
  4651--4659. {IEEE} Computer Society, 2016.
\newblock \doi{10.1109/CVPR.2016.503}.
\newblock URL \url{https://doi.org/10.1109/CVPR.2016.503}.

\end{thebibliography}
\bibliographystyle{iclr2021_conference}

\newpage
\appendix
\section*{Appendix}
\section*{3  Theory and Algorithm}
\label{sec:appendix}

\subsection*{3.2 Objective Function}

\inconsistent*
\begin{proof}
\label{proof:inconsistent}
Let $\trainspace$ represent the distribution using which samples $\left(\q,\sx\right)$ are generated. In our setting, generalization error $\varepsilon(M_\Theta)$ for a prediction network $M_\Theta$ 
%using zero-one loss 
can be written as: 
$ \varepsilon(M_\Theta) = \EX_{(\q,\sx) \sim \trainspace}(\mathbbm{1}\{\hat{y} \notin \sx\})$,
%\vspace{-1ex}
%\begin{equation}
%\varepsilon(M_\Theta) = \EX_{(\q,\sx) \sim \trainspace}(\mathbbm{1}\{\hat{y} \notin \sx\})
%\end{equation}
%\vspace{-2ex}
where $\hat{\target}=M_\Theta(\q)$, \ie\ the prediction of the network on unseen example sampled from the underlying data distribution.
% \todo{M:previously mentioned... no need to repeat if need space}
Assume a scenario when $\txi=\sxi$, $\forall \itr$, \ie, for each input $\qi$ all the corresponding solutions are present in the training data. Then, an unbiased estimator $\hat{\varepsilon}_{\traindata}(\model_\Theta)$ of the generalization error, computed using the training data is written as:
$\hat{\varepsilon}_{\traindata}(M_\Theta) = \frac{1}{m}\sum_{i=1}^{m} \mathbbm{1}\{\predi \notin \txi\}$.
%
% \begin{comment}
% \vspace{-2ex}
% \begin{equation}
%     \label{eq:trainerror}
%     \hat{\varepsilon}_{\traindata}(M_\Theta) = \frac{1}{m}\sum\limits_{i=1}^{m} \mathbbm{1}\{\predi \notin \txi\}
% \end{equation}
% \end{comment}
Clearly, the estimator obtained using $\totalloss(\Theta)$  (\Naive\ Objective), when the loss function $l_\Theta(\predi,\targetij)$ is replaced by a zero-one loss $\mathbbm{1}\{\predi \neq \targetij\}$, is not a consistent estimator for the generalization error. This can be easily seen by considering a case when $\predi \in \txi$ and $|\txi| > 1$. 
%Next, we will motivate the construction of a better objective function based on the unbiased estimator given in Equation~\ref{eq:trainerror}. 
\end{proof}

\subsubsection*{Optimization issues with \ccloss}
\label{subsec:cclossissues}
For the task of PLL, \citet{jin&al02} propose a modification of the cross entropy loss to tackle multiplicity of labels in the training data. Instead of adding the log probabilities, it maximizes the log of total probability over the given target set. 
Inspired by \citet{feng&al20}, we call it \ccloss:

\begin{equation}
\label{eq:ccloss}
 \totalloss_{cc}(\Theta) =-\frac{1}{m} \sum_{\itr=1}^{m} \log\left(\sum_{\targetij \in \txi } Pr\left( \targetij | \qi;\Theta\right)\right)
\end{equation}

However, in the case of structured prediction, optimizing $\totalloss_{cc}$  suffers from numerical instability.

We illustrate this with an example. Consider solving 9 x 9 sudoku puzzle, $\qi$. The probabilty of a particular target board, $\targetij$, is a product of $r = 9^2 = 81$ individual probabilities
over the discrete space $\dspace = \{1 \cdots 9\}$ of size $9$, \ie, $Pr(\targetij | \qi; \Theta) = \prod_{k=1}^{r} Pr(\targetij[k] | \qi; \Theta)$.
In the beginning of the training process, the network outputs nearly uniform probability over $\dspace$ for each of the $r$ dimensions, making  $Pr(\targetij | \qi; \Theta)$ very small ($= 9^{-81} \sim 5.09\mathrm{e}{-78}$). The derivative of $\log$ of such a small quantity becomes numerically unstable.

This issue is circumvented in the case of \naive\ loss by directly working with $\log$ probabilities and log-sum-exp trick \footnote{https://blog.feedly.com/tricks-of-the-trade-logsumexp/}. 
However, in the case of \ccloss, we need to sum the probabilities over the target set $\txi$ before taking $\log$, and computing $Pr(\targetij | \qi;\Theta)$ makes it numerically unstable.
Motivated by log-sum-exp trick, we use the following modifications which involves computing only $\log$ probabilities. 
For simplicity of notation, we will use $\Pr(\targetij)$ to denote $\Pr(\targetij | \qi; \Theta)$ and $\totalloss_{cc}^\itr$ to denote the CC Loss for the $\itr^{th}$ training sample. 
%Further, $maxp_\itr$ denotes the max probability, \ie,  $maxp_\itr = \max_{\targetij \in \txi} Pr(\targetij)$

\begin{equation*}
    \totalloss_{cc}^{\itr}  = -\log\left(\sum_{\targetij \in \txi } Pr( \targetij)\right)
\end{equation*}

Multiply and divide by $maxp_\itr = \max_{\targetij \in \txi} Pr(\targetij)$:

\begin{equation*}
   \totalloss_{cc}^{\itr} = -\log\left(maxp_\itr \sum_{\targetij \in \txi }
   \frac{Pr(\targetij)}{maxp_\itr}\right)\\
 \end{equation*}
 
 Use the identity: $\alpha = \exp(\log(\alpha))$:
\begin{align*}
   \totalloss_{cc}^{\itr} &=  -\log(maxp_\itr) - \log\left(\sum_{\targetij \in \txi} \exp\left(\log\left(\frac{Pr(\targetij)}{maxp_\itr}\right)\right)\right)\\
     &= -\log(maxp_\itr) - \log\left(\sum_{\targetij \in \txi} \exp\left(\log\left(Pr(\targetij)\right) - \log\left(maxp_\itr\right)\right)\right)
\end{align*}

In the above equations, we first separate out the max probability target (similar to log-sum-exp trick), and then exploit the observation that the ratio of (small) probabilities is more numerically stable  than the individual (small) probabilities. Further, we compute this ratio using the difference of individual log probabilities. 

\vspace{2ex}

\consistent*
\begin{proof}
\label{proof:consistent}
Let $\trainspace$ represent the distribution using which samples $\left(\q,\sx\right)$ are generated. In our setting, generalization error $\varepsilon(M_\Theta)$ for a prediction network $M_\Theta$ is:
\begin{equation*}
\varepsilon(M_\Theta) = \EX_{(\q,\sx) \sim \trainspace}(\mathbbm{1}\{\hat{y} \notin \sx\})
\end{equation*}
where $\hat{\target}=M_\Theta(\q)$, \ie\ the prediction of the network on unseen example sampled from the underlying data distribution.
Assume a scenario when $\txi=\sxi$, $\forall \itr$, \ie, for each input $\qi$ all the corresponding solutions are present in the training data. Then, an unbiased estimator $\hat{\varepsilon}_{\traindata}(\model_\Theta)$ of the generalization error, computed using the training data is written as:

\begin{equation*}
    \hat{\varepsilon}_{\traindata}(M_\Theta) = \frac{1}{m}\sum\limits_{i=1}^{m} \mathbbm{1}\{\predi \notin \txi\}
\end{equation*}

Now, consider the objective function 
\begin{gather*}
\totalloss'(\Theta) = \min_{\wt} \totalloss_\wt(\Theta,\wt) = \min_\wt \frac{1}{m} \sum\limits_{\itr=1}^{m} \sum\limits_{\targetij \in \txi } \wij \mathbbm{1}\{\predi \neq \targetij\} \\
=  \frac{1}{m} \sum\limits_{\itr=1}^{m} \min_\wi \sum\limits_{\targetij \in \txi } \wij \mathbbm{1}\{\predi \neq \targetij\}
\\
\text{\ \ s.t.}
\ \wij \in \{0,1\} \ \forall \itr, \forall \jtr \text{ and }  \ 
\sum\limits_{\jtr=1}^{|\txi|} \wij = 1, \forall \itr=1 \ldots m        
\end{gather*}

For any $\qi$, if the prediction $\predi$ is correct, \ie, $\exists \targetij* \in \txi \ s.t. \  \predi = \targetij*$, then $\mathbbm{1}\{\predi \neq \targetij*\} = 0$ and $\mathbbm{1}\{\predi \neq \targetij\} = 1, \forall \targetij \in \txi, \targetij \neq \targetij*$.  
Now minimizing over $\wi$ ensures $\wij* = 1$ and $\wij = 0 \ \forall \targetij \in \txi, \targetij \neq \targetij*$. Thus, the contribution to the overall loss from this example $\qi$ is zero.
On the other hand if the prediction is incorrect then $\mathbbm{1}\{\predi \neq \targetij\} = 1, \ \forall \targetij \in \txi$, thus making the loss from this example to be $1$ irrespective of the choice of $\wi$. As a result, $\totalloss'(\Theta)$ is exactly equal to $\hat{\varepsilon}_{\traindata}(M_\Theta)$ and hence it is a consistent estimator for generalization error.
\end{proof}
 
%Note that in the main paper, there is a factor of $1/m$ missing in the definition of $\totalloss_\wt$ in Equation 2. The correct definition is: $\totalloss_\wt =  \frac{1}{m} \sum\limits_{\itr=1}^{m} \sum\limits_{\targetij \in \txi } \wij \loss_\Theta(\predi, \targetij)$.

\subsection*{3.3 Greedy Formulation: \minloss}
\label{ex:minlossdetails}
\exminloss*
\begin{figure}[H]
    \centering
    \includegraphics[width=\textwidth]{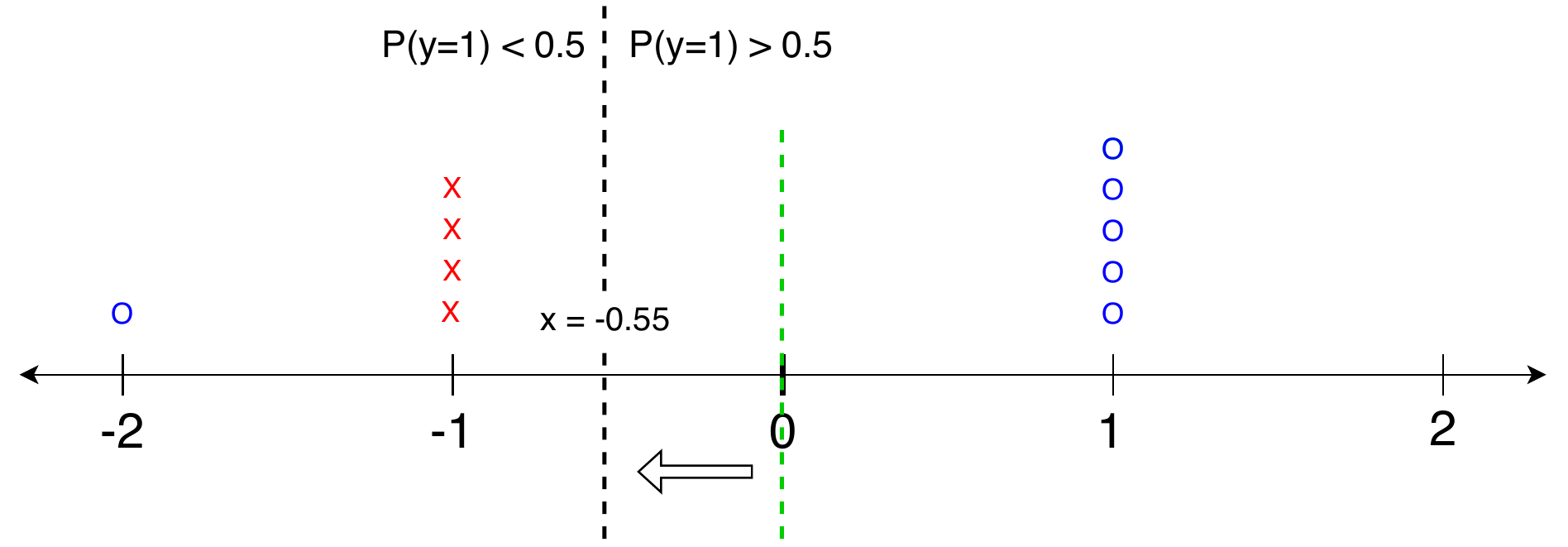}
    \caption{Decision Boundary learnt by logistic regression guided by \minloss. Green vertical line at $\q=0$ is the initial decision boundary and black vertical line at $\q = -0.55$ is the decision boundary at convergence.}
    \label{fig:log_reg}
\end{figure}

% 1. write equation for logistic regression. this results in hyperplane.
% 2. SGD with minloss, then the initial hyperplane will converge to final hyper plane.
% assuming sufficiently small learning rate.
% moves from initial to final as sgd is done....

For logistic regression, when input $\q$ is one dimensional, probability of the prediction being 1 for any given point $\q = [x]$ is given as: 
$$P(\target=1)=\sigma(\theta_1x+\theta_0) \ \ \text{  where  } \ \ \sigma(z) = \frac{1}{1 + e^{-z}},  z \in \mathcal{R}$$

The decision boundary is the hyperplane on which the probability of the two classes, $0$ and $1$, is same, \ie\ the hyperplane corresponding to $P(\target=0) = P(\target=1)=0.5$ or $\theta_1x+\theta_0=0$.

Initially, $\theta_1=0.1$ and $\theta_0=0$ implies that decision boundary lies at $\q=0$ (shown in green). 
All the points on the left of decision boundary are predicted to have $0$ label while all the points on the right have $1$ label.
For all the dual label points ($\q = 1$), $P(\target=1)<0.5$, thus \minloss\ greedily picks the label $0$ for all these points.
This choice by \minloss\ doesn't change unless the decision boundary goes beyond -1. 

However, we observe that with gradient descent using a sufficiently small learning rate, logistic regression converges at $\q=-0.55$ with \minloss\ never flipping its choice. 
Clearly, this decision boundary is sub-optimal since we can define a linear decision boundary ($\target=\mathbbm{1}\{\q > \alpha\}$, for $\alpha \le -2$, or $\target=\mathbbm{1}\{\q < \beta\}$, for $\beta  \ge 1$) that classifies all the points with label $1$ and achieves 100\% accuracy. 

\begin{comment}
\subsection*{3.5 Training Algorithm}
\begin{algorithm}[H]
    \caption{Joint Training of Prediction Network $\model_\Theta$ \& Selection Module $\selectionmodule_\phi$}
        %$warmup$ and $t$ are hyper-parameters}
    \label{algo:jttrain}
    \setcounter{AlgoLine}{0}
    \nl \textbf{Pre-train $\Theta$: } $\Theta_0 \leftarrow$ Pre-train $\Theta$ using Equation (4) and Equation (5) \\
    \nl \textbf{For Selection Module:} $\Theta\_ \leftarrow \Theta_0$ \\
    \nl \textbf{Pre-train $\phi$:  } $\phi_0 \leftarrow$ Pre-train $\phi$ using rewards from $\model_\Theta$ in Equation (7) \\
    %\nl Initialize: $\Theta \leftarrow \Theta_0 ; \phi <- \phi_0; \Theta\_ \leftarrow \Theta_0; t \leftarrow 0 $ \\
    \nl Initialize: $t \leftarrow 0 $ \\
    \nl \While{ not converged} 
    {
        \nl $B \leftarrow$ Randomly fetch a mini-batch \\
        %\nl \textbf{Get probabilities:} $\vec{p}_i \leftarrow  \selectionmodule_{\phi^t}(\trainsample), \  \forall i \in B$ \\
        %\nl \textbf{Get probabilities and the updated target subset:} $\vec{p}_i, \tx_i \leftarrow  \selectionmodule_{\phi^t}(\trainsample), \  \forall i \in B$ \\
        \nl \textbf{Get weights:} $\wi \leftarrow \selectionmodule_\phi(\trainsample, \Theta\_), \forall \itr \in B$ \\
        \nl \textbf{Get model predictions:} $ \predi \leftarrow \model_{\Theta^t}(\qi), \  \forall \itr \in B $ \\ 
        \nl \textbf{Get rewards:} $\vec{r}_\itr \leftarrow [\rewardi(\predi,\targetij), \  \forall \targetij \in \txi], \  \forall \itr \in B$  \\
        \nl \textbf{Update} $\phi$: Use Equation (7) to get $\phi^{(t+1)}$ \\
        \nl \textbf{Update} $\Theta$: Use Equation (8) to get $\Theta^{(t+1)}$ \\
        \nl \textbf{Update} $\Theta\_$ (for Selection Module) as: $\Theta\_ \leftarrow \Theta^{(t+1)}$ \text{if} \  $t \ \% \ copyitr = 0$ \ \\
        \nl \textbf{Increment} $t \leftarrow t+1$
    }
    %\caption{Algorithm for training with constraints.}
\end{algorithm}
%\end{minipage}
%\end{wrapfigure}
\vspace{-2ex}
\end{comment}

\section*{4 Experiments}
All the experiments are repeated thrice using different seeds. 
Hyperparameters are selected based on the held out validation set performance.

\textbf{Hardware Architecture:} Each experiment is run on a 12GB NVIDIA K40 GPU with 2880 CUDA cores and 4 cores of Intel E5-2680 V3 2.5GHz CPUs.

\textbf{Optimizer:} We use Adam as our optimizer in all our experiments. Initial learning rate is set to $0.005$ for NLM \cite{dong&al19} experiments while it is kept at $0.001$ for RRN \cite{palm&al18} experiments. Learning rate for RL phase is kept at $0.1$ times the initial learning rate. We reduce learning rate by a factor of $0.2$ whenever the performance on the dev set plateaus.

\subsection*{4.1  Details for N-Queens Experiment}
\noindent \textbf{Data Generation:}
To generate the train data, we start with all possible valid 10--Queens board configurations. We then generate queries by randomly masking any 5 queens. We check for all possible valid completions to generate potentially multiple solutions for any given query. Test data is also generated similarly. Training and testing on different board sizes ensures that no direct information leaks from the test dataset to the train dataset. Queries with multiple solutions have a small number of total solutions (2-6), hence we choose $\txi=\sxi, \forall \qi$.

\begin{figure}[htp]
\centering
\includegraphics[width=.3\textwidth]{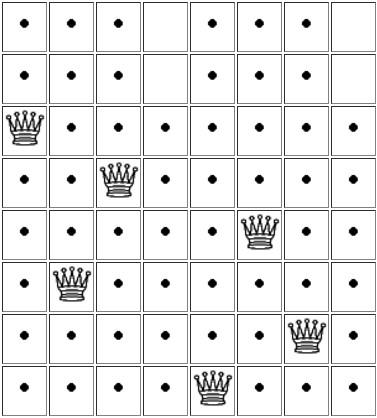}\hfill
\includegraphics[width=.3\textwidth]{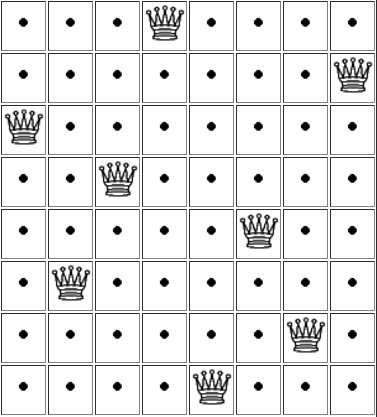}\hfill
\includegraphics[width=.3\textwidth]{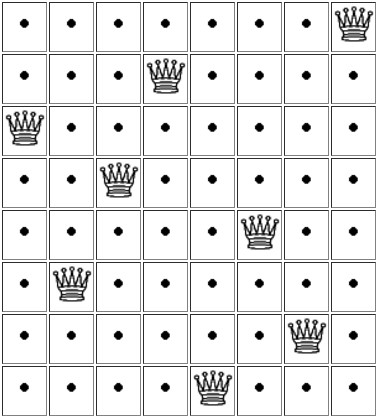}
\caption{8-Queens query along with its two possible solution.\protect\footnotemark}
\label{fig:figure3}
\end{figure}
\footnotetext{Image Source: Game play on \url{http://www.brainmetrix.com/8-queens/}}
% To generate the dataset for N-Queens we first enumerate all the possible solutions for N-Queens for any given value of $N$ by doing Depth-First-Search over the search tree consisting of all possible arrangements of the N queens. 
% This gives us the entire universe of solutions for any possible query. 
% Thereafter for every solution we generate all possible queries with $k$ missing queens. 
% Using this process we are able to enumerate all the k-missing queries for N-Queens. 
% We find all the solutions corresponding to a query by matching against the universe of solutions we found.

% We train our models on 10-Queens completion puzzles with 5 queens already placed on the board. Using the above procedure we were able to generate 165744 10-Queens puzzles with 5 queens remaining to be placed. Out of these 7.04\% of the queries have multiple solutions. 
% The maximum number of solutions for any query is 5. We use this set for the purpose of training. 

% We test and validate our models on 11-Queens problem with 6 missing queens.
% We found that 16.67\% of the queries have multiple solutions. 
% The maximum number of solutions was 5 in this setting as well. From this dataset we sample our test and validation sets.
% We use mutually exclusive sets of 10k size each for testing and validation.
% Out of 10K validation dataset, we randomly sample 1K queries for early stopping during training.

\noindent \textbf{Architecture Details for Prediction Network $\model_\Theta$:}
We use Neural Logic Machines (NLM)\footnote{Code taken from: \url{https://github.com/google/neural-logic-machines}} \cite{dong&al19} as the base prediction network for this task. 
NLM consists of a series of basic blocks, called `Logic Modules', stacked on top of each other with residual connections. Number of blocks in an NLM architecture is referred to as its $depth$.
Each block takes grounded predicates as input and learns to represent $M$ intermediate predicates as its output. See \cite{dong&al19} for further details. 
We chose an architecture with $M = 8$ and $depth$ = 30.
%, after initial experimentation with $M = 4 \ to \  10$ and $depth = 20 \ to \ 40$  architectures. 
We keep the maximum arity of intermediate predicates learnt by the network to be 2. 

\noindent \textbf{Input Output for Prediction Network:}
Input to NLM is provided in terms of grounded unary and binary predicates and the architecture learns to represent an unknown predicate in terms of the input predicates.
Each cell on the board acts as an atomic variable over which predicates are defined. 

\hspace{4ex} \textbf{Unary Predicates:} 
To indicate the presence of a Queen on a cell in the input, we use a unary predicate, `\textit{HasQueenPrior}'. 
It is represented as a Boolean tensor $\q$ of size $N^2$ with $1$ on $k$ out of $N^2$ cells indicating the presence of a Queen. 
The output $\target$ of the network is also a unary predicate `\textit{HasQueen}'  which indicates the final position of the queens on board.

\hspace{4ex} \textbf{Binary Predicates:} 
We use 4 binary predicates to indicate if two cells are in same row,  same column, same diagonal or same off-diagonal.
The binary predicates are a constant for all board configurations for a given size $N$ and hence can also be thought of as part of network architecture instead of input.

\textbf{Architecture Details for Selection Module $\selectionmodule_\phi$:}
We use another NLM as our latent model $\latentmodel_\phi$ within the selection module $\selectionmodule_\phi$. We fix $depth = 4$ and $M = 10$ for the latent model.

\textbf{Input Output for $\latentmodel_\phi$:} 
Input to $\latentmodel_\phi$ is provided in terms of grounded unary and binary predictates represented as tensors just like the prediction network.
$\latentmodel_\phi$ takes $1$ unary predicate as input, represented as an $N^2$ sized vector, $\targetij - \predi\_$, where $\predi\_$ is the prediction from its internal copy of the prediction network ($\model_\Theta\_$)  given the query $\qi$. 
For each $\targetij \in \txi$, $\latentmodel_\phi$ returns a score which is converted into a probability distribution $Pr_\phi(\targetij)$ over $\txi$ using a softmax layer.

\textbf{Hyperparameters:}

The list below enumerates the various hyper-parameters with a brief description (whenever required) and the set of its values that we experiment with. Best value of a hyper-parameter is selected based on performance on a held out validation set. 

%\textbf{Hyper-parameters during training of Baselines:}
\begin{enumerate}
\item \textbf{Data Sampling:} Since number of queries with multiple solutions is underrepresented in the training data, we up-sample them and experiment with different ratios of multi-solution queries in the training data. Specifically, we experiment with the ratios of $0.5$ and $0.25$ in addition to the two extremes of selecting  queries with only unique or only multiple solutions. 
Different data sampling may be used during pre-training and RL fine tuning phases.
\item \textbf{Batch Size:} We use a batch size of 4. 
We selected the maximum batch size that can be accommodated in 12GB GPU memory.
\item \textbf{\textit{copyitr}:} We experiment with two extremes of copying the prediction network after every update and copying after every $2500$ updates.
\item \textbf{Weight Decay in Optimizer:} We experiment with different weight decay factors of 1E-4, 1E-5 and 0.
\item \textbf{Pretraining $\phi$:} We pretrain $\latentmodel_\phi$ for 250 updates.
\end{enumerate}

\noindent \textbf{Training Time:}
Pre-training takes $10-12$ hours while RL fine-tuning take roughly $6-8$ hours using the hardware mentioned in the beginning of the section.

% \todo{time taken by selection module??
% relative speaking
% time overhead.
% pretraining
% additional fine tuning takes .....
% no difference in test time
% with additional overhead of selection module ...
% }

% \subsection{Hyperparameters Tried}
% Baselines
% \begin{enumerate}
%     \item Seeds - \{42,1729,3120\}
%     \item Weight Decay - \{0,1e-4,1e-5\}
%     \item learning rate - \{5e-3\}
%     \item wds = \{rs,ambiguous, one-one, three-one\}
%     \item Early stopping params
% \end{enumerate}

% RL
% \begin{enumerate}
%     \item Seeds - \{42,1729,3120\}
%     \item pretraining = \{unique, minloss+one-one\}
%     \item hds = \{ambiguous, one-one\}
%     \item lrh = \{5e-4\}
%     \item Weight Decay - \{0,1e-4,1e-5\}
%     \item Starting epoch - \{best, 80\%\}
%     \item copy-iter - \{0,10k\}
%     \item rl-reward - \{\st{acc},count\}
%     \item Early stopping params
%     \item pretrain phi - \{1\}
% \end{enumerate}

\subsection*{4.1  Details for Futoshiki Experiment}
\label{subsec:exp:fut:details}
\noindent \textbf{Data Generation:}
We start with generating all the possible ways in which we can fill a $N \times N$ grid such that no number appears twice in a row or column. For generating a query we sample any solution and randomly mask out $k$ positions on it. Also we enumerate all the $GreaterThan$ and $LessThan$ relations between adjacent pair of cells in the chosen solution and randomly add $q$ of these relations to the query. We check for all possible valid completions to generate potentially multiple solutions for any given query. Test data is also generated similarly. Training and testing on different board sizes ensures that no direct information leaks from the test dataset to the training data. Queries with multiple solutions have a small number of total solutions (2-6), so we choose $\txi=\sxi, \forall \qi$ .

% We start with generating all the possible ways in which we can fill a $N \times N$ grid such that no number appears twice in a row or column. This can be done by systematically choosing any N permutations of [1...N] and arranging them in a grid followed by verification of row and column constraints. Just like N-Queens problem, this procedure gives us the entire universe of solutions for any possible query. For generating a query we sample any solution and randomly mask out $k$ positions on it. Also we enumerate all the $GreaterThan$ and $LessThan$ relations between adjacent pair of cells in the chosen solution and randomly add $q$ of these relations to the query. For each query we enumerate all solutions by matching it against the universe of solutions.

% For training, we use $5 \times 5$ grid with 14 missing digits and a maximum of 5 constraints each of $GreaterThan$ and $LessThan$ types. We find that around 17.05\% of the queries generated from this configuration have multiple solutions. For generating test and validation data we use $6 \times 6$ board with 18 missing positions and a maximum of 5 constraints of each type. With this configuration the percentage of multi-solution queries increases to 24.95\%.
% We train, test and validate our models on datasets of size 10,000 each. Just like N-Queens we use a smaller dataset of 1000 queries on $6 \times 6$ grid for early stopping during training.

\noindent \textbf{Architecture Details for Prediction Network $\model_\Theta$:}
Same as N-Queens experiment.

\noindent \textbf{Input Output for Prediction Network:}
Just like N-Queens experiment, the input to the network is a set of grounded unary and binary predicates.
We define a grid cell along with the digit to be filled in it as an atomic variable. There are $N^2$ cells in the grid and each cell can take $N$ values, thus we have $N^3$ atomic variables over which the predicates are defined.

\hspace{4ex} \textbf{Unary Predicates:} 
To indicate the presence of a value in a cell in the input, we use a unary predicate, `\textit{IsPresentPrior}'. 
It is represented as a Boolean tensor $\q$ of size $N^3$ with $1$ on $k$ positions indicating the presence of a digit in a cell. 
The output $\target$ of the network is also a unary predicate `\textit{IsPresent}'  which indicates the final prediction of grid.
Additionally, there are two more unary predicates which represent the inequality relations that need to be honoured. 
Since inequality relations are defined only between pair of adjacent cells we can represent them using unary predicates.

\hspace{4ex} \textbf{Binary Predicates:} 
We use 3 binary predicates to indicate if two vairables are in same row,  same column, or same grid cell.
The binary predicates are a constant for all board configurations for a given size $N$.

\textbf{Architecture Details for Selection Module $\selectionmodule_\phi$:}
Same as N-Queens experiment.

\textbf{Input Output for $\latentmodel_\phi$:} 
Same as N-Queens experiment except for the addition of two more unary predicates corresponding to the inequality relations. First unary predicate is  $\targetij - \predi\_$ which is augmented with the inequality predicates.

\textbf{Hyperparameters:}
Same as N-Queens experiment.

\noindent \textbf{Training Time:}
Pre-training takes roughly  $12-14$ hours while RL fine-tuning takes $7-8$ hours.

% \subsection{Model Architecture}

% Unary predicates - $N^3$
% \begin{enumerate}
%     \item isTrue
%     \item Greater Than
%     \item Less than
% \end{enumerate}

% Binary predicates - $N^3 \times N^3$
% \begin{enumerate}
%     \item ShareRow
%     \item ShareCol
%     \item ShareCell
% \end{enumerate}

% \subsection{Hyperparameters Tried}
% Same as N-Queens

% Baselines
% \begin{enumerate}
%     \item Seeds - \{42,1729,3120\}
%     \item Weight Decay - \{0,1e-4,1e-5\}
%     \item learning rate - \{5e-3\}
%     \item wds = \{rs,ambiguous, one-one, three-one\}
%     \item Early stopping params
% \end{enumerate}

% RL
% \begin{enumerate}
%     \item Seeds - \{42,1729,3120\}
%     \item pretraining = \{unique, minloss+one-one\}
%     \item hds = \{ambiguous, one-one\}
%     \item lrh = \{5e-4\}
%     \item Weight Decay - \{0,1e-4,1e-5\}
%     \item Starting epoch - \{best, 80\%\}
%     \item copy-iter - \{0,3k\}
%     \item rl-reward - \{\st{acc},count\}
%     \item Early stopping params
%     \item pretrain phi - \{1\}
% \end{enumerate}

\subsection*{4.1 Details for Sudoku Experiment}
\subsection*{Data Generation for Sudoku}
We start with the dataset proposed by \citet{palm&al18}. It has $180k$ queries with only unique solution and the number of givens are uniformly distributed in the range from 17 to 34.
\footnote{Available at \url{https://data.dgl.ai/dataset/sudoku-hard.zip}}.  
For the queries with unique solution, we randomly sample $10000$ queries from their dataset, keeping their train, val and test splits. 
Using the queries with 17-givens from the entire dataset of size $180k$, we use the following procedure to create queries with multiple solutions: 

We know that for a Sudoku puzzle to have a unique solution it must have 17 or more givens \cite{mcguire&al12}. 
So we begin with the set of 17-givens puzzles having a unique solution and randomly remove 1 of the givens, giving us a 16-givens puzzle which necessarily has more than 1 correct solution. 
We then randomly add 1 to 18 of the digits back from the solution of the original puzzle, while ensuring that the query continues to have more than 1 solution.
\footnote{We identify all solutions to a puzzle using   \url{http://www.enjoysudoku.com/JSolve12.zip}}
This procedure gives us multi-solution queries with givens in the range of 17 to 34, just as  the original dataset of puzzles with only unique solution.
We also observed that often there are queries which have a very large number of solutions $(>100)$. 
We found that such Sudoku queries are often too poorly defined to be of any interest. 
So we filter out all queries having more than $50$ solutions. 
To have the same uniform distribution of number of givens as in the original dataset of puzzles with unique solution, we sample queries from this set of puzzles with multiple solutions such that we have a uniform distribution of number of givens in our dataset.

We repeat this procedure to generate our validation and test data by starting from validation and test datasets from \citet{palm&al18}.
%Lastly we sample the same number of unique-solution queries from the dataset proposed by \cite{palm&al18}.
%We retain the same train, test and validation data splits as used by \cite{palm&al18} as the original seed data for generating our train, test and validation data respectively. 

\noindent \textbf{Architecture Details for Prediction Network $\model_\Theta$:}
We use Recurrent Relational Network (RRN) \cite{palm&al18} \footnote{Code taken from: \url{https://github.com/dmlc/dgl/tree/master/examples/pytorch/rrn}} as the prediction network for this task. RRN uses a message passing based inference algorithm on graph objects. We use the same architecture as used by \citet{palm&al18} for their Sudoku experiments. 
Each cell in grid is represented as a node in the graph. All the cells in the same row, column and box are connected in the graph.
Each inference involves 32 steps of message passing between the nodes in the graph and the model outputs a prediction at each step.

\noindent \textbf{Input Output for Prediction Network:}
Input to the prediction network is represented as a $81 \times 10$ matrix with each of the $81$ cell represented as a one-hot vector representing the digits (0-9, 0 if not given).
Output of the prediction network is a $81 \times 10 \times 32$ tensor formed by concatenating the prediction of network at each of the $32$ steps of message passing.
The prediction at the last step is used for computing accuracy.

\textbf{Architecture Details for Selection Module $\selectionmodule_\phi$:}
We use a CNN as the latent model $\latentmodel_\phi$. 
The network consists of four convolutional layers followed by a fully connected layer. The four layers have 100, 64, 32 and 32 filters respectively. Each filter has a size of $3 \times 3$ with stride of length 1. 

\textbf{Input Output for $\latentmodel_\phi$:}
Similar to the other two experiments, the input to $\latentmodel_\phi$ is the output $\predi\_$ from the selection module's internal copy $\model_\Theta\_$ along with $\targetij$.
Since the prediction network gives an output at each step of message passing, we modify the $\latentmodel_\phi$ and the rewards for $\selectionmodule_\phi$ accordingly to be computed from prediction at each step instead of relying only on the final prediction.
%Prediction of $\model_\Theta\_$, $\predi\_$ is a $81 \times 10 \times 32$. We perform argmax over the second dimension to convert raw probability distributions to discrete values. Finally we take the ...... 

\textbf{Hyperparameters:}
\begin{enumerate}
\item \textbf{Data Sampling:} Since number of queries with multiple solutions and queries with unique solution are in equal proportion, we no longer need to upsample multi-solution queries. 
% However, as comparison we also experiment with two training datasets consisting exclusively of multi-solution queries and unique-solution queries.
\item \textbf{Batch Size:} We use a batch size of 32 for training the baselines, while for RL based training we use a batch size of 16.
% We selected the maximum batch size that can be accommodated in 12GB GPU memory.
\item \textbf{\textit{copyitr}:} We experiment with $\textit{copyitr}=1$ i.e. copying $\model_\Theta$ to $\model_\Theta\_$ after every update.
\item \textbf{Weight Decay in Optimizer:} We experiment with weight decay factor of 1E-4 (same as \citet{palm&al18}).
\item \textbf{Pretraining $\phi$:} We pretrain $\latentmodel_\phi$ for 1250 updates, equivalent to one pass over the train data.
\end{enumerate}

\textbf{Comparison with pretrained SOTA Model:}
We also evaluate the performance of a pretrained state-of-the-art neural Sudoku solver~\cite{palm&al18}\footnote{Available at:  \url{https://data.dgl.ai/models/rrn-sudoku.pkl}} on our dataset. This model trains and tests on instances with single solution. The training set used by this model is a super-set of the unique solution queries in our training data and contains 180,000 queries. This model achieves a high accuracy of 94.32\% on queries having unique solution (\uniquetest) in our test data which is a random sample from their test data only, but the accuracy drop to 24.48\% when tested on subset of our test data having only queries that have multiple solutions (\ambtest).
We notice that the performance on \ambtest\ is worse than \unique\ baseline, even though both are trained using queries with only unique solution. This is because the pretrained model overfits on the the queries with unique solution whereas the \unique\ baseline early stops based on performance on a dev set having queries with multiple solutions as well, hence avoiding overfitting on unique solution queries.

\noindent \textbf{Training Time:}
Pre-training the RRN takes around $20-22$ hours whereas RL fine-tuning starting with the pretrained model takes around $10-12$ hours.

\begin{table}[]
\small
\centering
\captionsetup{justification=centering}
\caption{Mean test accuracy ($\pm$standard error) over three runs for multiplicity aware methods compared with baselines. \uniquetest: test queries with only one solution, \ambtest: queries with more than one solution.}
\label{table:std_err}
\resizebox{\columnwidth}{!}
{
\begin{tabular}{@{}clccccccc@{}}
\toprule
\multicolumn{1}{l}{} &
   &
  \textbf{\randomsample} &
  \textbf{\arbit} &
  \textbf{\unique} &
  \textbf{\ccloss} &
  \textbf{\minloss} &
  \textbf{\simpleselectr} &
  \textbf{\ourformulation} \\  \midrule \addlinespace[0.8em]
\multirow{3}{*}{\textbf{N-Queens}} &
  \textbf{\uniquetest} &
  70.59 $\pm$ 0.09 &
  75.09 $\pm$ 0.33 &
  72.91 $\pm$ 0.65 &
  75.31 $\pm$ 0.45 &
  77.29 $\pm$ 0.38 &
  77.35 $\pm$ 1.07 &
  \textbf{79.73 $\pm$ 0.34} \\ \addlinespace[0.25em]
 &
  \textbf{\ambtest} &
  55.34 $\pm$ 2.82 &
  66.85 $\pm$ 2.46 &
  61.13 $\pm$ 1.13 &
  75.76 $\pm$ 1.60 &
  77.22 $\pm$ 1.28 &
  79.46 $\pm$ 3.31 &
  \textbf{79.68 $\pm$ 1.35} \\ \addlinespace[0.25em]
 &
  \cellcolor[HTML]{EFEFEF}\textbf{Overall} &
  \cellcolor[HTML]{EFEFEF}68.04 $\pm$ 0.46 &
  \cellcolor[HTML]{EFEFEF}73.72 $\pm$ 0.59 &
  \cellcolor[HTML]{EFEFEF}70.94 $\pm$ 0.71 &
  \cellcolor[HTML]{EFEFEF}75.39 $\pm$ 0.47 &
  \cellcolor[HTML]{EFEFEF}77.28 $\pm$ 0.48 &
  \cellcolor[HTML]{EFEFEF}77.70 $\pm$ 1.40 &
  \cellcolor[HTML]{EFEFEF}\textbf{79.72 $\pm$ 0.46} \\ \addlinespace[0.5em] \midrule  \addlinespace[0.5em]
 \multirow{3}{*}{\textbf{Futoshiki}} &
  \textbf{\uniquetest} &
  65.59 $\pm$ 0.62 &
  67.63 $\pm$ 0.96 &
  65.49 $\pm$ 0.28 &
  77.68 $\pm$ 0.34 &
  76.78 $\pm$ 0.81 &
  \textbf{78.15 $\pm$ 0.65} &
  78.01 $\pm$ 0.70 \\ \addlinespace[0.25em]
 &
  \textbf{\ambtest} &
  14.99 $\pm$ 2.17 &
  19.13 $\pm$ 3.14 &
  14.22 $\pm$ 2.77 &
  69.30 $\pm$ 1.76 &
  70.35 $\pm$ 1.16 &
  70.88 $\pm$ 1.48 &
  \textbf{71.57 $\pm$ 1.02} \\ \addlinespace[0.25em]
 &
  \cellcolor[HTML]{EFEFEF}\textbf{Overall} &
  \cellcolor[HTML]{EFEFEF}52.96 $\pm$ 0.96 &
  \cellcolor[HTML]{EFEFEF}55.53 $\pm$ 1.44 &
  \cellcolor[HTML]{EFEFEF}52.70 $\pm$ 0.74 &
  \cellcolor[HTML]{EFEFEF}75.59 $\pm$ 0.46 &
  \cellcolor[HTML]{EFEFEF}75.18 $\pm$ 0.64 &
  \cellcolor[HTML]{EFEFEF}76.33 $\pm$ 0.65 &
  \cellcolor[HTML]{EFEFEF}\textbf{76.4 $\pm$ 0.36} \\ \addlinespace[0.5em] \midrule \addlinespace[0.5em]
 \multirow{3}{*}{\textbf{Sudoku}} &
  \textbf{\uniquetest} &
  87.85 $\pm$ 0.84 &
  89.19 $\pm$ 1.12 &
  \textbf{87.53 $\pm$ 0.82} &
  88.26 $\pm$ 0.88 &
  88.25 $\pm$ 0.35 &
  88.73 $\pm$ 0.68 &
  88.69 $\pm$ 0.55 \\ \addlinespace[0.25em]
 &
  \textbf{\ambtest} &
  09.13 $\pm$ 0.89 &
  66.39 $\pm$ 2.82 &
  13.65 $\pm$ 1.79 &
  76.58 $\pm$ 1.63 &
  76.93 $\pm$ 1.50 &
  80.19 $\pm$ 1.51 &
  \textbf{81.73 $\pm$ 2.00} \\ \addlinespace[0.25em]
 &
  \cellcolor[HTML]{EFEFEF}\textbf{Overall} &
  \cellcolor[HTML]{EFEFEF}48.49 $\pm$ 0.86 &
  \cellcolor[HTML]{EFEFEF}77.79 $\pm$ 1.96 &
  \cellcolor[HTML]{EFEFEF}50.59 $\pm$ 0.49 &
  \cellcolor[HTML]{EFEFEF}82.42 $\pm$ 0.45 &
  \cellcolor[HTML]{EFEFEF}82.59 $\pm$ 0.62 &
  \cellcolor[HTML]{EFEFEF}84.46 $\pm$ 0.69 &
  \cellcolor[HTML]{EFEFEF}\textbf{85.21 $\pm$ 0.76} \\ \bottomrule
\end{tabular}
}
\end{table}

\subsection*{4.3 Results and Discussions}
\Cref{table:std_err} reports the mean test accuracy along with the standard error over three runs for different baselines and our three approaches.
Note that the standard errors reported here are over variations in the choice of different random seeds and it is difficult to do a large number of such experiments (with varying seeds) due to high computational complexity. Below, we compare the performance gains for each of the seed separately.

\noindent \textbf{Seed-wise Comparison for Gains of  \ourformulation\ over \minloss}
\label{appendix:results:significance}

In \Cref{tab:seedwise} we see that \ourformulation\ performs better than \minloss\ for each of the three random seeds independently in all the experiments. We note that starting with the same seed in our implementation leads to identical initialization of the prediction network parameters.

\begin{table}[H]
\caption{Seed wise gains of \ourformulation\ over \minloss\ across different random seeds and experiments}
\label{tab:seedwise}
\centering
\vspace{-1ex}
\begin{tabular}{@{}cccc@{}}
\toprule
\textbf{Seed}                                                        & \textbf{Sudoku} & \textbf{NQueens} & \multicolumn{1}{l}{\textbf{Futoshiki}} \\ \midrule
\textbf{42}   & 3.12\% & 3.40\% & 0.69\% \\
\textbf{1729} & 2.75\% & 2.53\% & 1.21\% \\
\textbf{3120} & 1.98\% & 1.39\% & 1.77\% \\
\rowcolor[HTML]{EFEFEF} 
\midrule
\multicolumn{1}{l}{\cellcolor[HTML]{EFEFEF}\textbf{Avg. Gain}} & 2.61\%          & 2.44\%           & 1.22\%                                 \\ \bottomrule
\end{tabular}
\end{table}
%\vspace{-2ex}

%\vspace{-1ex}
\noindent \textbf{Details of the Analysis Depicted in \Cref{fig:acc}}
\label{appendix:results:sizevariation}

The large variation in the size of solution set ($|\sx|$) in Sudoku allows us to assess its effect on the overall performance. 
To do so, we divide the test data into different bins based on the number of possible solutions for each test input ($\qi$) and compare the performance of the best model obtained in the three settings: \unique, \minloss\ and \ourformulation.

\begin{wrapfigure}{r}{0.5\textwidth}
    \captionsetup{justification=centering}
    \vspace*{-1ex}
    \caption{\#givens and \#datapoints vs size of query's solution set}
    \label{fig:givens_and_filled}
\vspace*{-1.5ex}
  \begin{center}
        \includegraphics[width=0.48\textwidth]{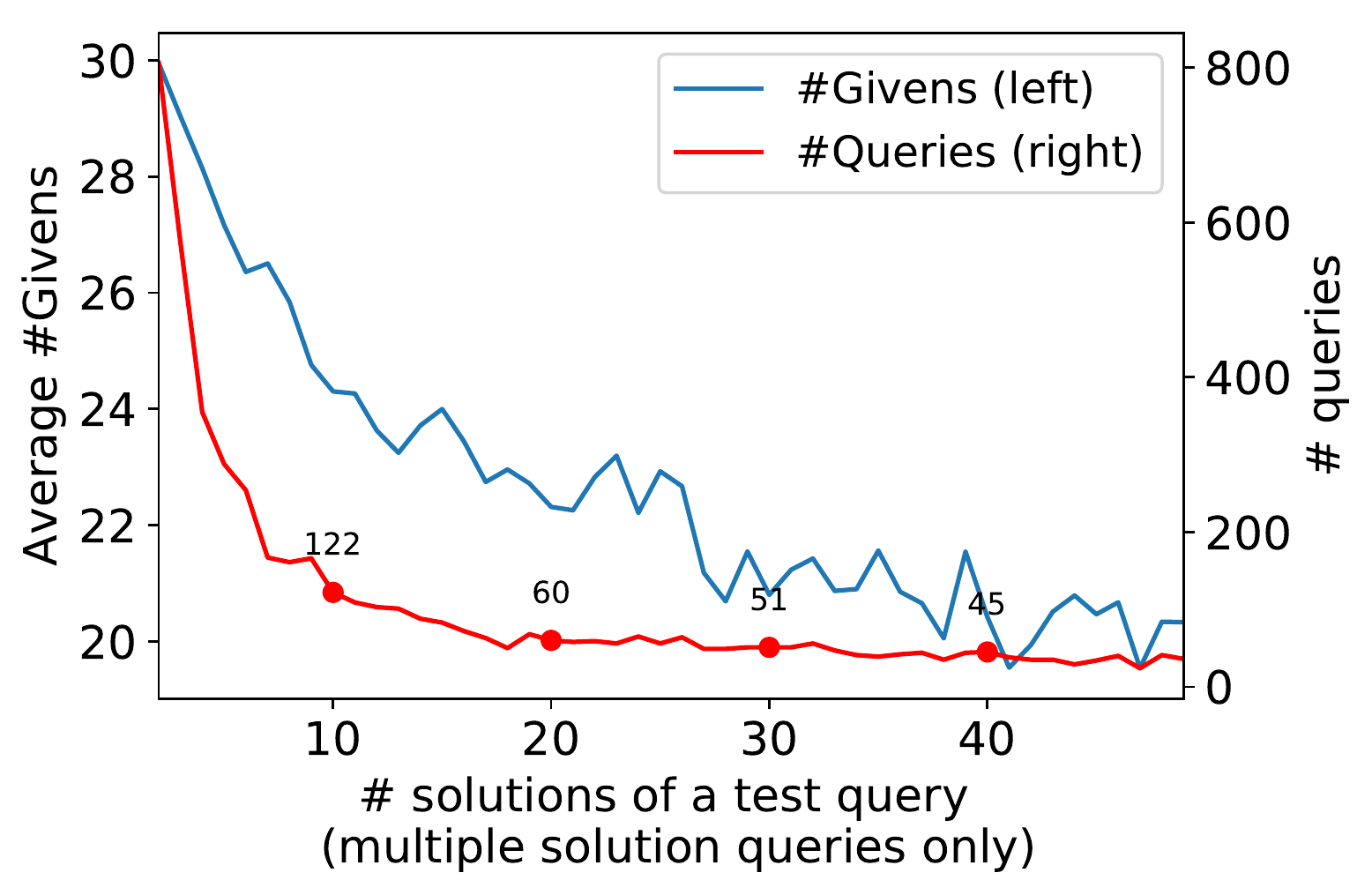}
    \end{center}
\end{wrapfigure}

By construction, the number of test points with a unique solution is equal to the total number of test points with more than one solution. 
Further, while creating the puzzles with more than one solution, we ensured uniform distribution of number of filled cells from $17$ to $34$, as is done in \cite{palm&al18} for creating puzzles with unique solutions in their paper. Hence, the number of points across different bins (representing solution count) may not be the same. 
\Cref{fig:givens_and_filled} shows the average size of each bin and the average number of filled cells for multiple solution queries in a bin.
As we move to the right in graph (i.e., increase the number of solutions for a given problem), the number of filled cells in the corresponding Sudoku puzzles decreases, resulting in harder problems. This is also demonstrated by the corresponding decrease in performance of all the models in \Cref{fig:acc}. \ourformulation\ is most robust to this decrease in performance. 

\vspace{2ex}

\noindent \textbf{Discussion on Why \ourformulation\ is better than \simpleselectr?}
\label{appendix:results:selectrvsiexplr}

In this section, we argue why \ourformulation\ is more powerful than \simpleselectr, even though the reward structure for training the RL agent is such that eventually the  $\latentmodel_\phi$ in the RL agent will learn to pick the target closest to the current prediction (to maximize reward), and hence $\selectionmodule_\phi$ will be reduced to \simpleselectr.

We see two reasons why \ourformulation\ is better than \simpleselectr.

First, recall that the \simpleselectr\ strategy gives the model an exploration probability based on its current prediction. But note that this is “only one” of the possible exploration strategies. For example, another strategy could be to explore based on a fixed epsilon probability. There could be several other such possible exploration strategies that could be equally justified. Instead of hard coding them, as done for \simpleselectr, our $\latentmodel_\phi$ network gives the ability to learn the best exploration strategy, which may depend in complex ways on the global reward landscape (\ie, simultaneously optimizing reward over all the training examples). Hence we use a neural module for this.

Second, note that \simpleselectr\ is parameter-free and fully dependent on $\model_\Theta$, thus, has limited representational power of its own to explore targets. 
This is not the case with $\latentmodel_\phi$. Its output $\rltarget$ and and the target ($\target_c$) closest to $\model_\Theta$ prediction $\pred$ may differ i.e. $\rltarget \neq \target_c$ (see next paragraph for an experiment on this). When this happens, the gradients will encourage change in $\Theta$ so that $\pred$ moves towards $\rltarget$, and simultaneously encourage change in $\phi$ so that $\rltarget$ moves towards $\target_c$. That is, a stable alignment between the two models could be either of the two, $\target_c$ or $\rltarget$. This, we believe, increases the overall exploration of the model. Which of $\target_c$ or $\rltarget$ get chosen depends on how strongly the global landscape (other data points) encourage one versus the other. Such flexibility is not available to \simpleselectr\ where only $\Theta$ parameters are updated. We believe that this flexibility to explore more could enable \ourformulation\ to jump off early local optima, thus achieving better performance compared to \simpleselectr.

We provide preliminary experimental evidence that supports that \ourformulation\ explores more. For every training data point $q$, we check if the $\argmax$ of $\latentmodel_\phi$ probability distribution (i.e., highest probability $\rltarget$) and $\target_c$ differ from each other. We name such data points “exploratory”. We analyze the fraction of exploratory data points as a function of training batches. See \cref{fig:exploratorypts}. We observe that in the initial several batches, \ourformulation\ has $3-10\%$ of training data exploratory. This number is, by definition, $0\%$ for \simpleselectr\ since it chooses $\rltarget$ based on model probabilities. This experiment suggests that \ourformulation\ may indeed explore more early on. 

\begin{wrapfigure}{r}{1\textwidth}
\vspace*{-1ex}
    \captionsetup{justification=centering}
    \vspace*{-1ex}
    \caption{Fraction of training samples for which $\argmax$ of $\latentmodel_\phi$ probability distribution is different from the target closest to model prediction. For \simpleselectr, this fraction is $0\%$}
    \label{fig:exploratorypts}
\vspace*{-1.5ex}
  \begin{center}
        \includegraphics[width=0.98\textwidth]{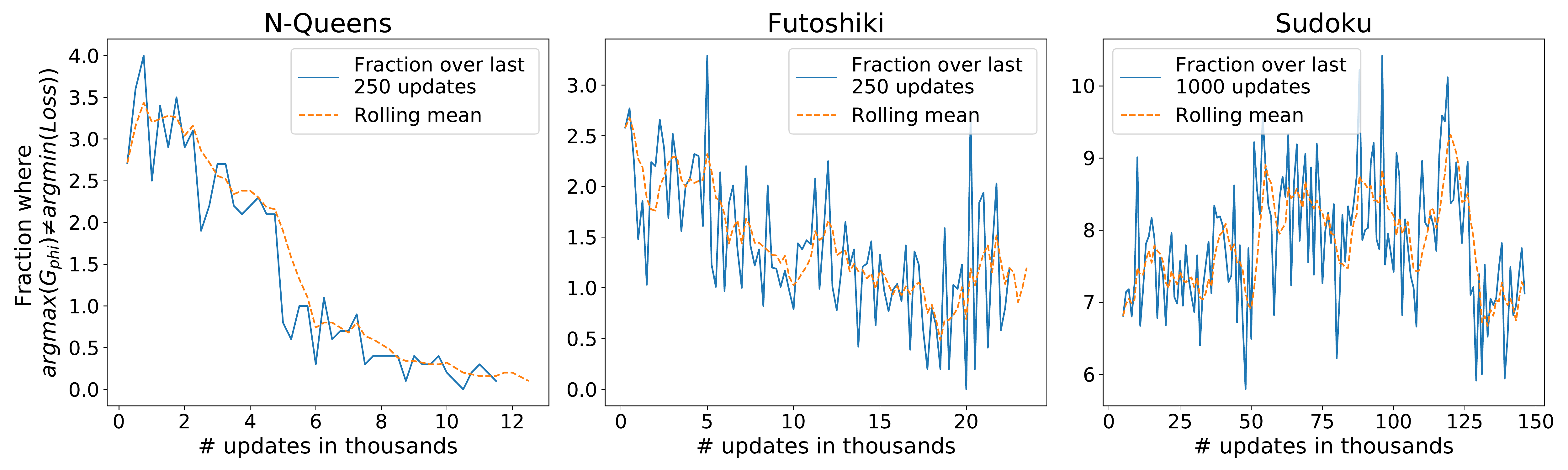}
    \end{center}
\vspace*{-2ex}
\end{wrapfigure}

\end{document}